\documentclass[runningheads]{llncs}

\usepackage{configuration/eccv}

\usepackage{graphicx}
\usepackage{booktabs}

\usepackage[accsupp]{axessibility}  

\usepackage{hyperref}

\usepackage{orcidlink}

\usepackage{makecell}
\newcommand{\IPC}{\texttt{IPC}\xspace}
\newcommand{\algopt}{\textsc{CIM}\xspace}

\newcommand{\Squeeze}{\textsc{Squeeze}\xspace}
\newcommand{\Recover}{\textsc{Recover}\xspace}
\newcommand{\Relabel}{\textsc{Relabel}\xspace}
\definecolor{red}{RGB}{196,78,82}
\definecolor{orange}{RGB}{242, 142, 43}
\definecolor{green}{RGB}{118, 183, 178}
\definecolor{blue}{RGB}{76, 114, 176}

\definecolor{orange}{RGB}{178,92,35}
\definecolor{green1}{RGB}{95,145,51}

\definecolor{red1}{RGB}{197,64,57}
\definecolor{blue1}{RGB}{59,130,220}
\definecolor{green2}{RGB}{82,181,150}
\definecolor{purple1}{RGB}{105,93,223}
\definecolor{orange1}{RGB}{164, 47, 54}
\definecolor{green3}{RGB}{94,145,51}

\definecolor{c_step1}{RGB}{60, 105, 199}
\definecolor{c_step2}{RGB}{159,37,28}
\definecolor{c_step3}{RGB}{107,40,157}

\providecommand{\abs}[1]{\left\lvert#1\right\rvert}

\providecommand{\E}{{\mathbb E}}
\providecommand{\E}[1]{{\mathbb E}\left.#1\right. }        %

\DeclareMathOperator*{\argmin}{arg\,min\,}

\providecommand{\xx}{\mathbf{x}}

\providecommand{\mtheta}{\boldsymbol{\theta}}

\providecommand{\cA}{\mathcal{A}}

\providecommand{\cD}{\mathcal{D}}

\providecommand{\cG}{\mathcal{G}}

\providecommand{\cL}{\mathcal{L}}

\providecommand{\cN}{\mathcal{N}}

\providecommand{\cP}{\mathcal{P}}

\providecommand{\cR}{\mathcal{R}}
\providecommand{\cS}{\mathcal{S}}
\providecommand{\cT}{\mathcal{T}}

\newenvironment{talign*}
{\csname align*\endcsname}
{\endalign}

\usepackage[utf8]{inputenc}         %
\usepackage[T1]{fontenc}            %
\usepackage{url}                    %
\usepackage{booktabs}               %
\usepackage{amsfonts}               %
\usepackage{nicefrac}               %
\usepackage{microtype}              %
\usepackage{xcolor}                 %
\usepackage{algorithm}
\usepackage{algorithmic}
\usepackage{graphicx}
\usepackage{subcaption}
\usepackage[flushleft]{threeparttable}
\usepackage{float}
\usepackage{multirow}
\usepackage{xspace}
\usepackage{enumitem}
\usepackage[font=small]{caption}
\usepackage{autobreak}
\usepackage{sidecap}
\usepackage{wrapfig}
\usepackage{bbding}
\usepackage[toc, page, header]{appendix}
\usepackage{tikz}
\usepackage{xcolor}
\usepackage{pifont}
\usepackage{mdframed}
\usepackage{colortbl}

\definecolor{coral}{RGB}{255,127,80}
\definecolor{darkgreen}{RGB}{0,100,0}
\definecolor{darkyellow}{RGB}{204,153,0}
\definecolor{salmon}{RGB}{250,128,114}
\definecolor{darkred}{RGB}{150,0,0}
\newcommand{\darkredtext}[1]{{\color{darkred}#1}}

\newcommand{\secref}[1]{\hyperref[#1]{\darkredtext{Sec.~\ref*{#1}}}}
\newcommand{\thmref}[1]{\hyperref[#1]{\darkredtext{Thm.~\ref*{#1}}}}
\newcommand{\defref}[1]{\hyperref[#1]{\darkredtext{Def.~\ref*{#1}}}}
\newcommand{\propref}[1]{\hyperref[#1]{\darkredtext{Prop.~\ref*{#1}}}}
\newcommand{\assumpref}[1]{\hyperref[#1]{\darkredtext{Assump.~\ref*{#1}}}}
\newcommand{\remarkref}[1]{\hyperref[#1]{\darkredtext{Rem.~\ref*{#1}}}}
\newcommand{\hypref}[1]{\hyperref[#1]{\darkredtext{Hyp.~\ref*{#1}}}}
\newcommand{\conjref}[1]{\hyperref[#1]{\darkredtext{Conj.~\ref*{#1}}}}
\newcommand{\lemref}[1]{\hyperref[#1]{\darkredtext{Lem.~\ref*{#1}}}}
\newcommand{\corref}[1]{\hyperref[#1]{\darkredtext{Cor.~\ref*{#1}}}}
\newcommand{\noteref}[1]{\hyperref[#1]{\darkredtext{Nota.~\ref*{#1}}}}
\newcommand{\claimref}[1]{\hyperref[#1]{\darkredtext{Clm.~\ref*{#1}}}}
\newcommand{\obsref}[1]{\hyperref[#1]{\darkredtext{Obs.~\ref*{#1}}}}
\newcommand{\algref}[1]{\hyperref[#1]{\darkredtext{Alg.~\ref*{#1}}}}
\newcommand{\figref}[1]{\hyperref[#1]{\darkredtext{Fig.~\ref*{#1}}}}
\newcommand{\tabref}[1]{\hyperref[#1]{\darkredtext{Tab.~\ref*{#1}}}}
\newcommand{\appref}[1]{\hyperref[#1]{\darkredtext{App.~\ref*{#1}}}}
\renewcommand{\eqref}[1]{\hyperref[#1]{\darkredtext{Eq.~\ref*{#1}}}}

\usepackage{xcolor}
 
\definecolor{green}{RGB}{44, 160, 44}
\usepackage{graphicx}

\begin{document}

\title{Condensing Large-Scale Datasets Directly with Minimal Information Loss}


\author{Xinyi Shang\inst{1*}\and
Peng Sun\inst{2,3*}\and
Bei Shi\inst{4,*}\and
Zixuan Wang\inst{2}\and
Tao Lin\inst{3,\dag}
}

\institute{University College London, United Kingdom \and
Zhejiang University, China  \and
Westlake University, China  \and
University of Macau, China
}

\authorrunning{X.~Shang et al.}


\maketitle

\begingroup
\renewcommand{\thefootnote}{*}
\footnotetext[1]{Equal contribution.}
\renewcommand{\thefootnote}{\dag}
\footnotetext[2]{Corresponding author. Email: lintao@westlake.edu.cn}
\endgroup

\setcounter{footnote}{0}
\renewcommand{\thefootnote}{\arabic{footnote}}

\begin{abstract}
    Recent advancements in scaling dataset distillation rely heavily on decoupled information extraction pipelines, comprising \Squeeze, \Recover, and \Relabel stages. Despite their scalability to large-scale datasets, these methods suffer from prohibitive computational overhead and poor cross-architecture generalization.
    In this paper, we reveal the root cause of these bottlenecks: the implicit dual-compression process, from data to model and back to images, inherently induces severe information loss.
    Crucially, we empirically and theoretically demonstrate that this loss creates a distribution shift that fundamentally compromises the widely adopted \Relabel strategy, transforming the pre-trained model into an unreliable labeler that yields sub-optimal labels.
    To overcome these critical flaws, we propose \algopt, a novel, metric-driven framework that abandons the flawed dual-compression paradigm.
    Instead, \algopt explicitly quantifies and minimizes the information gap between the original and synthetic datasets. By directly aligning the data distributions, our approach ensures high-fidelity information condensation and inherently satisfies the prerequisites for effective relabeling.
    Extensive experiments demonstrate that \algopt establishes a new state-of-the-art. Notably, it distills ImageNet-1K at an \IPC=10 in merely 80 minutes on a single RTX-4090 GPU, achieving an unprecedented 48.7\% Top-1 accuracy on ResNet-18 and significantly outperforming previous SOTA approaches, such as NRR-DD and DELT, by 2.6\% and 2.9\%, respectively. Our code is available at \url{https://github.com/LINs-lab/CIM}. 
  \keywords{Dataset Distillation \and Minimal Information Loss \and Relabel}
\end{abstract}

\section{Introduction}
\label{sec:intro}

Dataset distillation (DD) \cite{wang2018dataset} aims to condense the knowledge from a massive training dataset into a remarkably small synthetic set, preserving comparable generalization performance.
By substituting the original data with this compact proxy, dataset distillation drastically reduces training overhead and storage costs.
Consequently, it has emerged as an enabling technique for diverse applications, including continual learning~\cite{zhao2020dataset,rosasco2021distilled}, neural architecture search~\cite{wang2021rethinking,such2020generative}, and privacy-preserving learning~\cite{xiong2023feddm,dong2022privacy,shang2022federated,chen2022private}.

To scale dataset distillation to large-scale datasets like ImageNet-1K~\cite{deng2009imagenet} and ImageNet-21K~\cite{ridnik2021imagenet}, a recent line of research focuses on information extraction pipelines~\cite{sun2024diversity,yin2023squeeze,shao2023generalized,shen2024delt,tran2025enhancing}.
Most notably, the pioneering SRe$^2$L~\cite{yin2023squeeze} introduces a decoupled three-stage paradigm: (i) \Squeeze the dataset information into a pre-trained model, (ii) \Recover this information back into the image space to form synthetic data, and (iii) \Relabel the distilled images using the pre-trained model to inject label-space knowledge.
By avoiding costly unrolled optimization, this paradigm yields strong empirical results.

Despite their success, existing extraction-based pipelines suffer from two critical bottlenecks: poor cross-architecture generalization and prohibitive computational overhead, particularly during the \Recover stage.
We identify the root cause as the severe information loss induced by the implicit \textit{dual-compression} process: first from data to model parameters (\Squeeze), and then from model back to synthetic images (\Recover).
Furthermore, while the \Relabel stage improves performance, our theoretical and empirical analyses reveal a hidden vulnerability: its efficacy is strictly conditional on distribution alignment.
When the dual-compression process causes the synthetic samples to drift away from the original data distribution, the pre-trained model acts as an unreliable labeler, yielding sub-optimal labels.
Therefore, ensuring distributional proximity is crucial to fully unlock the benefits of the \Relabel strategy.

Motivated by these insights, we abandon the flawed dual-compression paradigm and propose \algopt, a novel framework that explicitly \underline{c}ondenses \underline{i}nformation by \underline{m}inimizing the \emph{information loss} between real and distilled datasets (\algopt).
Unlike prior extraction-based methods, which rely on complex inversion to recover informative synthetic images from a pre-trained model, \algopt is explicitly \emph{metric-driven}.
Specifically, we formulate a principled metric to comprehensively quantify the information gap, and consequently the distribution shift, between the synthetic samples and their real counterparts.
By directly optimizing the distilled set to minimize this gap within a unified objective, \algopt ensures high-fidelity information retention and strict distribution alignment.
This design not only eliminates the computationally expensive recovery process but also natively satisfies the conditions required for effective relabeling. \textbf{Our contributions are summarized as follows:}
\begin{enumerate}[leftmargin=12pt,itemsep=5pt]
    \item We conduct a rigorous revisiting of information extraction-based dataset distillation, revealing that the inherent dual-compression process leads to severe information loss. This loss not only degrades cross-architecture generalization and computational efficiency but also causes a distribution shift that fundamentally compromises the widely adopted \Relabel strategy.
    \item We introduce \algopt, a novel, metric-driven framework that explicitly quantifies and minimizes the information gap between the original and distilled datasets, achieving more faithful, efficient, and aligned information condensation.
    \item Extensive experiments verify that \algopt establishes a new state-of-the-art across various scale datasets. Notably, our framework distills ImageNet-1K at \IPC$ = 10$ in merely 80 minutes on a single RTX-4090 GPU, achieving an unprecedented 48.7\% Top-1 accuracy on ResNet-18, outperforming previous SOTA approaches, such as NRR-DD and DELT, by 2.6\% and 2.9\%, respectively.
\end{enumerate}

\section{Related Work}
\label{sec:related_work}
The primary aim of dataset distillation is to condense a large dataset into a smaller, yet highly representative subset, preserving its core semantic and statistical characteristics.
Wang et al.~\cite{wang2018dataset} first introduce the dataset distillation as a bi-level meta-learning optimization problem.
The outer loop aims at optimizing the meta-dataset, while the inner loop focuses on training models using the distilled dataset.
Existing methods can be roughly divided into three paradigms for solving this complex bi-level optimization problem.

\vspace{5pt}
\noindent{\bf Uni-level optimization-based paradigm.}
Tackling this bi-level problem is complex, especially when optimizing proxy models via gradient descent, which involves unraveling an intricate computational graph.
studies~\cite{zhou2022dataset,loo2022efficient} have proposed approximating model training using kernel ridge regression, which provides a closed-form solution for optimal weights, thereby reducing training costs and improving performance.
Despite these advancements, such methods still struggle with extensive computational demands or limitations due to the approximations in convex relaxation.

\vspace{5pt}
\noindent{\bf Matching-based paradigm.}
Another strategy involves emulating the behaviors of the original dataset in the distilled one. They focus on minimizing disparities between surrogate models trained on both synthetic and original datasets.
The key metrics for this are matching gradients~\cite{zhao2020dataset,kim2022dataset,zhang2023accelerating,liu2023dream}, features~\cite{wang2022cafe}, distribution~\cite{zhao2023dataset,zhao2023improved}, and training trajectories~\cite{cazenavette2022dataset,cui2022dc,du2023minimizing,cui2023scaling,yu2023dataset,guo2023towards}.
Trajectory and gradient matching, in particular, has shown impressive results with low \IPC.
However, these methods often tailor the distilled dataset to specific network architectures, limiting their generalizability.
Cazenavette et al.~\cite{cazenavette2023generalizing} address this by proposing the GLaD that synthesizes more realistic images to enhance generalization.
These methods incur substantial computational overhead due to the frequent calculation of discrepancies between the distilled and original datasets, requiring numerous iterations for optimization until convergence.
Therefore, computational and memory challenges remain, particularly when scaling to large datasets.

\vspace{5pt}
\noindent{\bf Information extraction-based paradigm.}
The SRe$^2$L framework~\cite{yin2023squeeze}, as the first work efficiently scalable to ImageNet-1K, introduces a novel decoupled bi-level learning paradigm.
This involves three stages:
1) \Squeeze relevant information from the original dataset into a pre-trained model,
2) \Recover this information into the image space,
3) \Relabel the distilled images by using the pre-trained model to further distill knowledge into the label space.
Its efficiency and effectiveness have garnered community attention, spurring a series of research efforts.
Shao et al.~\cite{shao2023generalized} note that SRe$^2$L is limited to specific backbones and layers, impacting the generalization of the distilled dataset.
They advocate for using diverse backbones for more precise and effective distillation.
Yin et al.~\cite{yin2023dataset} further enhance SRe$^2$L with curriculum data augmentation.
\cite{liu2023dataset} use Wasserstein distance to create more representative images.
Sun et al.~\cite{sun2024diversity} introduce an optimization-free approach RDED that achieves notable diversity and realism in distilled datasets.
Recently, DELT~\cite{shen2024delt} mitigates the issue of low within-class diversity of SRe$^2$L by varying the number of iterations for different IPCs during the data synthesis phase.
Moreover, NRR-DD~\cite{tran2025enhancing} addresses the shortcoming of SRe$^2$L in often emphasizing instance-specific features by effectively capturing both class-general and instance-specific features.
EDC~\cite{shao2024elucidating} introduces a unified framework grounded in both empirical and theoretical foundations. Building on the distillation process introduced by SRe$^2$L~\cite{yin2023squeeze}, it incorporates two key enhancements: soft category-aware matching and dynamic adjustments to the learning rate schedule.
These additions can further optimize the distillation process.

\section{On the Pitfalls of Information Extraction Paradigm}
In this section, we revisit the information extraction paradigm for dataset distillation in depth.
We highlight two critical aspects:
1) The primary challenge is significant information loss, which hurts the quality and diversity in distilled images.
2) We empirically and theoretically demonstrate the critical attributes necessary for retaining the efficacy of \Relabel and seek an approach to effectively distill information from original images while adhering to these attributes.

\subsection{Preliminary}\label{sec:motivation}
\noindent{\bf Dataset distillation.}
Given a large-scale dataset $\cT= \{ \xx_{i}, y_{i} \}_{i=1}^{\abs{\cT}}$ which consists of ${\abs{\cT}}$ samples, dataset distillation aims to synthesize a smaller set $\cS=(\cS_X, \cS_Y) = \{ \widetilde{\xx}_{j}, \widetilde{y}_{j} \}_{j=1}^{\abs{\cS}}$ with ${\abs{\cS}}$ synthetic samples such that models trained on $\cT$ will have similar performance as models trained on $\cS$:
\begin{equation}
    \textstyle
    \mathbb{E}_{\xx\sim P_\mathcal{D}}[ \ell\left(\phi_{\mtheta_{\cT}}(\xx), y\right)]\simeq
    \mathbb{E}_{\xx\sim P_\mathcal{D}}[ \ell\left(\phi_{\mtheta_{\cS}}(\xx), y\right)] \,,
    \label{eq:ddobj}
\end{equation}
where $P_\mathcal{D}$ is the test real distribution, $\xx$ is a data sample, $\ell$ is the loss function, i.e., cross-entropy loss.
Here, $\mtheta_{\cT}$ and $\mtheta_{\cS}$  denotes the parameters of the neural network $\phi$ trained on $\cT$ and $\cS$, respectively.
\looseness=-1

\vspace{5pt}
\noindent{\bf A closer look at information extraction paradigm. }
As the first effective yet efficient solution to allow the dataset distillation on diverse-scale datasets such as ImageNet-1K~\cite{deng2009imagenet}, information extraction-based methods have attracted attention and inspired many subsequent works.
These methods typically employ a three-stage distillation process, \textit{indirectly} transferring information from the original to the distilled images.
The first stage involves condensing information from the complete dataset into pre-trained neural network models via \Squeeze, followed by the extraction of this information into distilled images using \Recover \cite{yin2023squeeze,liu2023dataset}.
Upon the distilled data samples, \Relabel applies a pre-trained model to further distill knowledge into the label space.
However, this paradigm encounters several key challenges:
\begin{enumerate}[leftmargin=12pt,itemsep=5pt]
    \item The \Recover stage \textit{necessitates batch normalization} in pre-trained models \cite{ioffe2015batch} to align the statistical features between distilled and original images \cite{yin2023squeeze,liu2023dataset}.
    \item \textit{Significant information loss} during both \Squeeze and \Recover stages leads to distilled images with minimal content, adversely affecting performance, particularly in low \IPC settings \cite{sun2024diversity}.
    \item Distilled images often \textit{exhibit unrealistic textures or semantics}, tailored to specific networks, thereby limiting their generalization ability \cite{shao2023generalized}.
    \item Despite outperforming other paradigms (e.g., matching-based methods) in terms of efficiency~\cite{yin2023squeeze}, the \Recover stage is still \textit{computationally demanding}, requiring numerous optimization iterations~\cite{yin2023dataset}.
\end{enumerate}

In the meanwhile, the influence of \Relabel is under-explored~\cite{yin2023squeeze,shao2023generalized}.
These challenges motivate us to explore a method to simultaneously address four key problems by \textit{directly} condensing information from the original dataset for image distillation, abandoning traditional decoupled \Squeeze and \Recover stages, and to further explore the role of \Relabel.

\begin{wrapfigure}{r}{0.5\linewidth} 
    \vspace{-25pt}
    \centering
   \includegraphics[width=\linewidth]{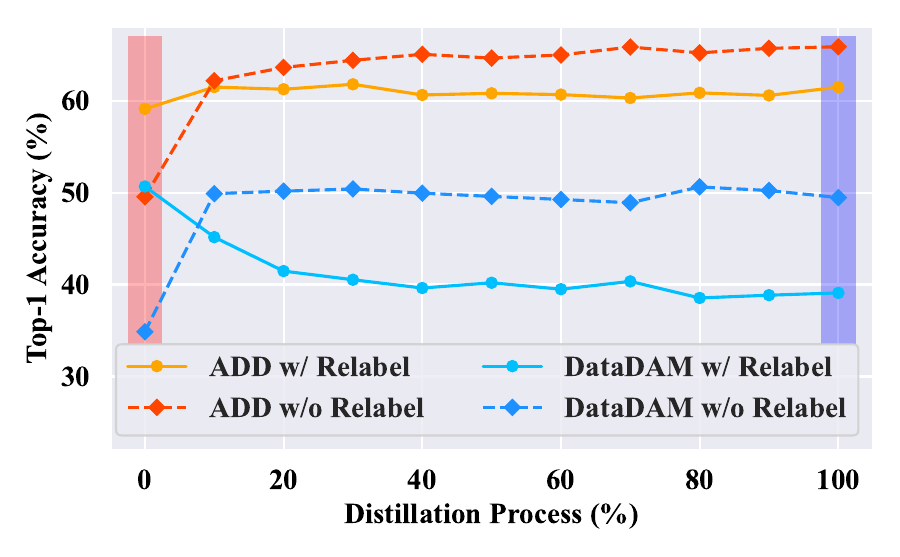}
   \vspace{-10pt}
    \caption{Applying \Relabel to ADD~\cite{zhang2023accelerating} and DataDAM~\cite{sajedi2023datadam}.
    We evaluate the distilled images with $\IPC=10$ during the distillation process.
    The results indicate that \Relabel only assists the early-stage distilled datasets.
    }
    \label{fig:dd_relabel}
    \vspace{-15pt}
\end{wrapfigure}

\subsection{Does \Relabel Always Help Distilled Images?}
\label{sec:relabel}
\Relabel has become a widely adopted and effective technique in dataset distillation, as demonstrated in recent works \cite{yin2023squeeze, sun2024diversity, shang2024gift}.
The core idea is to use a model pre-trained on the full real dataset to re-assign labels to the distilled images, and then train a student model on these relabeled distilled samples.

To evaluate the effect of \Relabel, we compare state-of-the-art DD methods with and without this strategy. 
As reported in \darkredtext{Tab. 7} (\darkredtext{App. E}), removing \Relabel leads to a \textit{substantial performance degradation}.
For example, on ImageNet-1k, the accuracy of SRe$^2$L and G-VBSM drops to 1.1\% and 0.8\%, respectively, when \Relabel is disabled.

Given its effectiveness, a natural question is whether \Relabel can be used as a plug-and-play component for DD methods that do not originally rely on it.
To answer this, we apply \Relabel to distilled images produced by representative non-\Relabel methods, such as ADD~\cite{zhang2023accelerating} and DataDAM~\cite{sajedi2023datadam}.
The results are shown in \figref{fig:dd_relabel}.
We observe that performance gains are mainly limited to the \emph{very early} distillation stage (i.e., when the ``distilled'' images remain close to the initial real images used for initialization).
In contrast, as distillation proceeds, relabeling the increasingly optimized distilled images provides little benefit and can even hurt performance.

This behavior suggests a mismatch between the relabeling model and the evolved distilled samples.
Specifically, the model used for \Relabel is trained exclusively on the original real dataset, whereas distilled images gradually deviate from the real-image distribution during optimization.
Such a distribution shift alters semantic and/or textural cues, making the pre-trained relabeler less reliable on distilled samples (see \darkredtext{App. K} for qualitative evidence).
Consequently, inaccurate relabels accumulate, and the downstream student model suffers.

\vspace{3pt}
\noindent{\bf A theoretical perspective.}
Beyond the above empirical observation, we theoretically reveal that a model well-trained on the original, undisturbed samples (i.e., the model used for \Relabel) may only provide suboptimal labels for shifted distilled samples.
Concretely, standard DD typically starts from a subset of real samples $\{\xx_j\}_{j=1}^{|\cS|}$ drawn from the full dataset $\cT$, and iteratively optimizes them into distilled samples $\{\widetilde{\xx}_j\}_{j=1}^{|\cS|}$ with $\widetilde{\xx}_j=\xx_j+\epsilon_j$, where $\epsilon_j$ denotes the learned perturbation.
A shift occurs because the optimization updates $\epsilon_j$ may change both semantics and textures of $\xx_j$, causing the distilled distribution to deviate from the original one.

\begin{proposition}
    \label{prop:subop_label}
    For two original Gaussian distributions $\cN_1(\mu_1, \sigma^2)$ and $\cN_2(\mu_2, \sigma^2)$, we first define their shifted versions $\cN_1^s(\mu_1 + s_1, \sigma^2)$ and $\cN_2^s(\mu_2 + s_2, \sigma^2)$. Then, the optimal classification model $f_{opt}$ for $\cN_1^s$ and $\cN_2^s$ achieves the following sub-optimal classification accuracy for $\cN_1$ and $\cN_2$:
    \begin{small}
        \begin{align}
            P_{\text{acc}} &= \frac{1}{2} \left( \Phi\left(\frac{\mu_2 + s_2 - \mu_1 + s_1}{2\sigma}\right) + 1 - \Phi\left(\frac{\mu_1 + s_1 - \mu_2 + s_2}{2\sigma}\right) \right) \\
            P_{\text{acc}} &\leq \frac{1}{2} \left( \Phi\left(\frac{\mu_2 - \mu_1}{2\sigma}\right) + 1 - \Phi\left(\frac{\mu_1 - \mu_2}{2\sigma}\right) \right)
        \end{align}
    \end{small}%
    Here, $\Phi(\cdot)$ is the cumulative distribution function (CDF) of the standard normal distribution.
\end{proposition}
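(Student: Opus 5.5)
The plan is to compute the accuracy of $f_{opt}$ explicitly and then compare it with the Bayes-optimal classifier for the unshifted pair. Throughout I will make explicit the implicit assumptions of the statement. These are: the setting is one-dimensional; the two classes have equal priors $\tfrac12$; and $\mu_1<\mu_2$ with the shift preserving the order, $\mu_1+s_1<\mu_2+s_2$. Without the ordering assumption the sign convention of the displayed formula would flip.

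\textbf{Step 1 (optimal classifier for the shifted pair).} The two shifted Gaussians share the variance $\sigma^2$ and have equal priors. Their log-likelihood ratio is therefore affine in $x$. Hence $f_{opt}$ is a threshold rule: predict class $1$ iff $x<t^\ast$, where
\[
t^\ast=\tfrac12\bigl(\mu_1+s_1+\mu_2+s_2\bigr),
\]
the midpoint of the shifted means. This is the standard equal-variance LDA computation, obtained by setting $\log\cN_1^s(x)=\log\cN_2^s(x)$.

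\textbf{Step 2 (accuracy on the original distributions).} For an arbitrary threshold $t$ define
\[
g(t)=\tfrac12\Bigl(\Phi\bigl(\tfrac{t-\mu_1}{\sigma}\bigr)+1-\Phi\bigl(\tfrac{t-\mu_2}{\sigma}\bigr)\Bigr),
\]
which is exactly the probability of correct classification when $x\sim\cN_1$ or $x\sim\cN_2$ with probability $\tfrac12$ each. Substituting $t=t^\ast$ gives the arguments
\[
\frac{t^\ast-\mu_1}{\sigma}=\frac{\mu_2+s_2-\mu_1+s_1}{2\sigma},\qquad \frac{t^\ast-\mu_2}{\sigma}=\frac{\mu_1+s_1-\mu_2+s_2}{2\sigma},
\]
which reproduces the first displayed identity for $P_{\text{acc}}$.

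\textbf{Step 3 (the upper bound).} It remains to show $g(t^\ast)\le g(m)$ with $m=\tfrac12(\mu_1+\mu_2)$. Substituting $t=m$ gives exactly the right-hand side of the inequality. I would argue this in one of two ways.
\begin{itemize}
\item Abstractly: $g(m)$ is the Bayes accuracy for $(\cN_1,\cN_2)$, which by Step 1 applied with $s_1=s_2=0$ dominates the accuracy of every classifier, in particular $f_{opt}$.
\item Directly: $g'(t)=\tfrac{1}{2\sigma}\bigl(\varphi(\tfrac{t-\mu_1}{\sigma})-\varphi(\tfrac{t-\mu_2}{\sigma})\bigr)$, where $\varphi$ is the standard normal density. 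This is positive for $t<m$ and negative for $t>m$, because $|t-\mu_1|<|t-\mu_2|$ iff $t<m$. So $m$ is the unique maximizer of $g$.
\end{itemize}
Equality holds iff $s_1+s_2=0$, i.e.\ the shifted midpoint coincides with the original one. This makes precise that a generic shift yields strictly suboptimal labels.

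\textbf{Main obstacle.} The calculus is routine; the delicate point is bookkeeping of assumptions. If the shift reverses the order of the means, the optimal shifted rule assigns class $1$ to $x>t^\ast$. The accuracy is then $1-g(t^\ast)$, which does not match the displayed formula, although the upper bound still holds by Bayes optimality. I would therefore state the order-preservation assumption explicitly and remark that the bound itself is assumption-free.
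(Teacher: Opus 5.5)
Your proof is correct. Its Step 1 is the whole of the paper's proof: the appendix applies Bayes' rule with equal priors to $\cN_1^s,\cN_2^s$, equates the two log-likelihoods, and solves for the boundary $x=\frac{1}{2}(\mu_1+s_1+\mu_2+s_2)$, and then stops. The paper never evaluates this rule on the unshifted Gaussians, so it derives neither the displayed identity for $P_{\text{acc}}$ nor the inequality.

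Your Steps 2 and 3 supply exactly the missing part. The accuracy function $g(t)$ gives the identity by substituting $t^\ast$. The bound then follows either from Bayes optimality of the midpoint rule at $m=\frac{1}{2}(\mu_1+\mu_2)$ for the original pair, or from the sign analysis of $g'(t)$.

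Your version also adds two things the paper lacks. First, it states the ordering hypotheses $\mu_1<\mu_2$ and $\mu_1+s_1<\mu_2+s_2$. The paper never states them, yet they are needed to fix which side of the threshold is labeled class $1$; the second also justifies the paper's division by $\mu_1+s_1-\mu_2-s_2$. Second, your equality case $s_1+s_2=0$ shows the suboptimality is only weak: a shift that preserves the midpoint costs nothing.

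One wording fix. The bound is free of the order-preservation hypothesis, but it is not assumption-free: it still needs equal priors and $\mu_1<\mu_2$. If $\mu_1>\mu_2$, the right-hand side equals $\Phi\bigl(-|\mu_1-\mu_2|/(2\sigma)\bigr)<\frac{1}{2}$, and the bound fails already for $s_1=s_2=0$.
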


That is, a relabeler trained on the original dataset may provide unreliable labels when applied to distribution-shifted distilled samples.
Therefore, to maximize the efficacy of \Relabel, it is necessary to ensure the distilled data distribution remains close to that of the original real dataset.
Only under such alignment can distilled samples be recognized and labeled correctly by the pre-trained relabeler.

\vspace{5pt}
\noindent{\bf Subset selection from real data.}
To achieve these goals, a straightforward approach involves directly selecting images from the original dataset to construct the distilled dataset.
Numerous works \cite{sun2024diversity,forgy1965cluster,welling2009herding,tran2025enhancing} have explored methods for selecting diverse and representative key samples from the original dataset to create a distilled dataset.
These methods can ensure that the distilled data can be accurately identified by the pre-trained model and maximize the effectiveness of \Relabel.
A notable contribution is RDED \cite{sun2024diversity}, which extracts key patches from each image based on \textit{high realism scores} to construct a distilled dataset.

\begin{figure}
    \centering
    \begin{subfigure}[b]{0.32\linewidth}
        \centering
        \includegraphics[width=\textwidth]{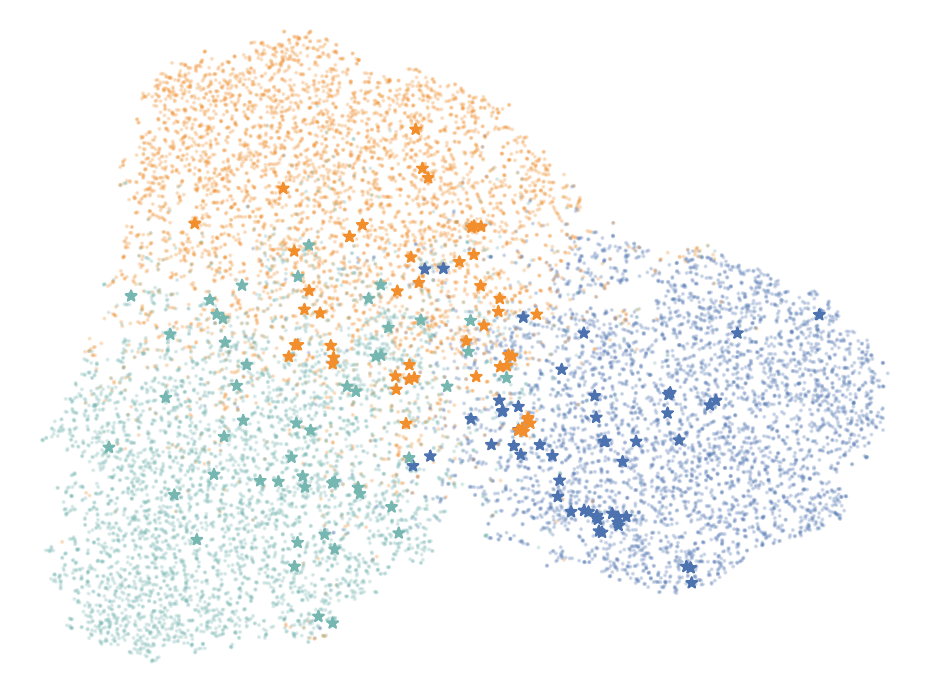}
        \subcaption{SRe$^2$L \cite{yin2023squeeze}}
    \end{subfigure}
    \begin{subfigure}[b]{0.32\linewidth}
        \centering
        \includegraphics[width=\linewidth]{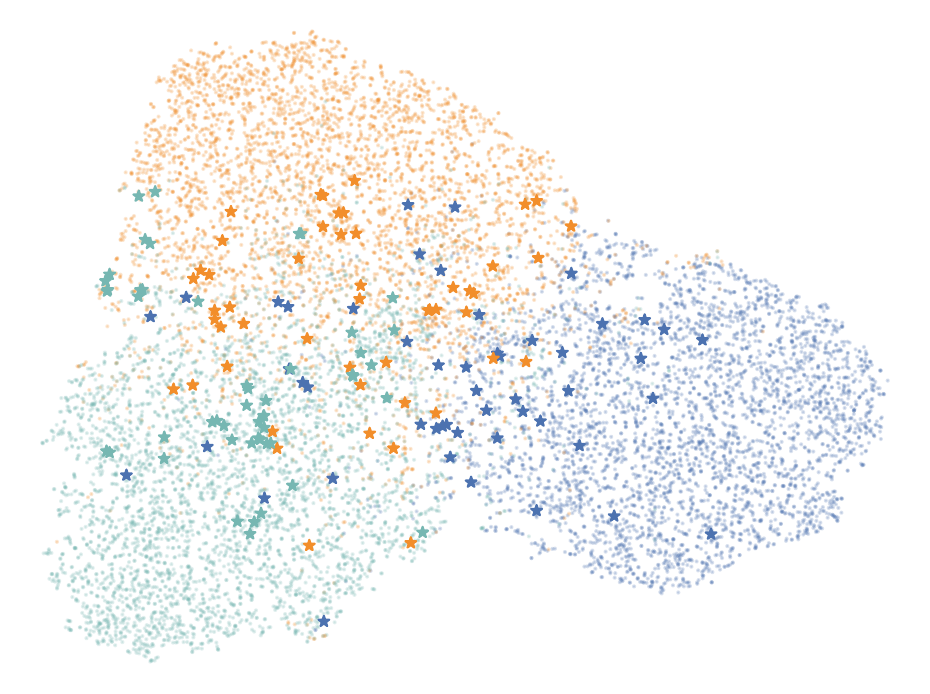}
        \subcaption{G-VBSM \cite{shao2023generalized}}
    \end{subfigure}
    \begin{subfigure}[b]{0.32\linewidth}
        \centering
        \includegraphics[width=\linewidth]{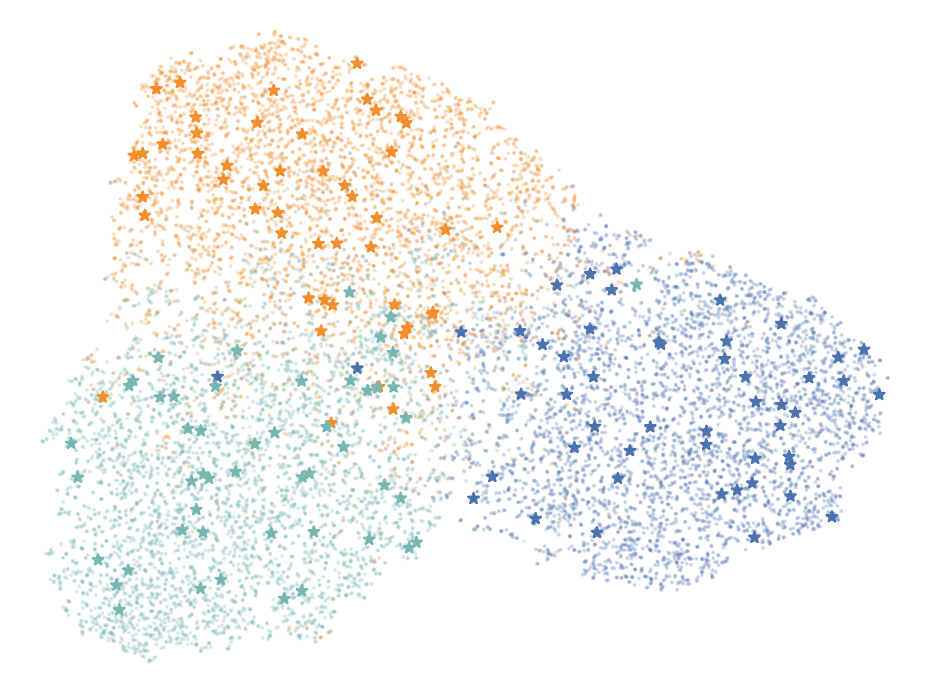}
        \subcaption{RDED \cite{sun2023diversity}}
    \end{subfigure}
    \vspace{-0.05in}
     \caption{Visualization of Feature Distributions for the Original and Distilled Datasets.
    The distilled datasets are optimized using state-of-the-art distillation techniques: SRe$^2$L \cite{yin2023squeeze}, G-VBSM \cite{shao2023generalized}, and RDED \cite{sun2023diversity}.
    \textcolor{orange}{Orange}, \textcolor{green}{green}, and \textcolor{blue}{blue} points depict the first three classes of CIFAR-10, while $\star$ points represent the corresponding distilled datasets with images per class (\IPC) of 50. The lighter shades represent the original dataset.}
   \label{fig:distribution}
      \vspace{-1em}
\end{figure}

To further validate whether the sample selected by RDED can achieve the distribution as the original dataset, we visualize the feature distributions \footnote{Features are extracted using a model trained on the full dataset, followed by visualization with T-SNE \cite{van2008visualizing}.} of the distilled dataset alongside those of the original dataset in \figref{fig:distribution}. Specifically, we compare the state-of-the-art methods, including SRe$^2$L \cite{yin2023squeeze}, G-VBSM \cite{shao2023generalized}, and RDED \cite{sun2024diversity}.
Among these, \textit{RDED demonstrates the closest alignment with the feature distribution of the original dataset.}
Additionally, as shown in \tabref{tb:main_cifar} and \tabref{tab:main_imagenet_resnet18}, RDED achieves superior performance compared to the other two SOTA methods.
Therefore, by default, \algopt adopts the selection mechanism of RDED\footnote{See \darkredtext{App. C} for details of RDED.}.
Note that the subset selection strategy itself is not the focus of this work.
Our proposed \algopt is selection-method-agnostic and can incorporate various selection strategies, as demonstrated in \tabref{tab:selection_ipc10}.

\section{Methodology}
\label{sec:method}

The following section begins by introducing the formal definitions of effective information for a given sample and the resulting information gap between any two given samples.
Furthermore, we propose a method based on minimizing the information gap between the selected sample subsets and the distilled samples to ensure the preservation of information integrity in the distilled set.
The distillation process of our \algopt is depicted in \figref{fig:method}, with the detailed algorithm outlined in \darkredtext{Alg. 1} in Appendix.

\vspace{5pt}
\noindent{\bf On the information gap of two samples.}
For the selected $\IPC$ subsets of key images, we aim to compress each of them into a more compact pixel space, i.e., distilled image $\widetilde{\xx}$, thus forming $\cS_X$.
However, given the constraints of limited pixel space storage, using a naive solution---e.g., directly resizing and concatenating multiple images from a subset into one---results in a significant reduction in the fineness and detail of the original images.
To effectively capture and condense the salient information from the original samples into the distilled ones, we aim to enable each distilled sample $\widetilde{\xx}$ to encapsulate the information of a selected subset from $\mathcal{T}_X$.
We begin by formally defining the effective information of a data sample as follows.

\begin{definition}[Observation-based Effective Information]
    \label{def:ei}
    Let $\xx_i$ represent a sample from any domain (e.g., image).
    Define an observer group $\cR = \{\xi_j\}$, where each observer $\xi_j$ is capable of extracting or interpreting features from $\xx_i$, and $|\cR| \geq 1$.
    The effective information of the sample $\xx_i$, as observed by the group $\cR$, is conceptualized as the distribution $\cP_{\xx_i | \cR}(z)$.
    This distribution is formulated as:
    \begin{equation}
        \cP_{\xx_i | \cR}(z) := \{z | z = \xi_j(\xx_i), \forall \xi_j \in \cR \} \,,
    \end{equation}
    where $z$ denotes the set of features or interpretations extracted from $\xx_i$ by an observer $\xi_j$ within $\cR$.
\end{definition}
The \defref{def:ei} posits that the effective information of a sample encompasses the set of its features as perceived or extracted by a diverse set of observers.
Samples are considered to have similar effective information if they result in comparable feature sets across the observers in $\cR$.
We consequently define the effective information gap between two given samples $\xx_i$ and $\xx_j$.

\begin{definition}[Pairwise Effective Information Gap]
    \label{def:ei_gap}
    Let $\xx_i$ and $\xx_j$ represent two samples from any domain, and given an observer group $\cR = \{\xi_j\}$.
    This effective information gap is formulated as the difference between their effective information:
    \begin{equation}
        \label{eq:ei_gap}
        I_G(\xx_i, \xx_j;\cR) = \textup{D}_{\textup{KL}}(\cP_{\xx_i | \cR} || \cP_{\xx_j | \cR}) \,.
    \end{equation}
\end{definition}

\begin{wrapfigure}{r}{0.6\linewidth} 
    \vspace{-25pt}
    \centering
   \includegraphics[width=\linewidth]{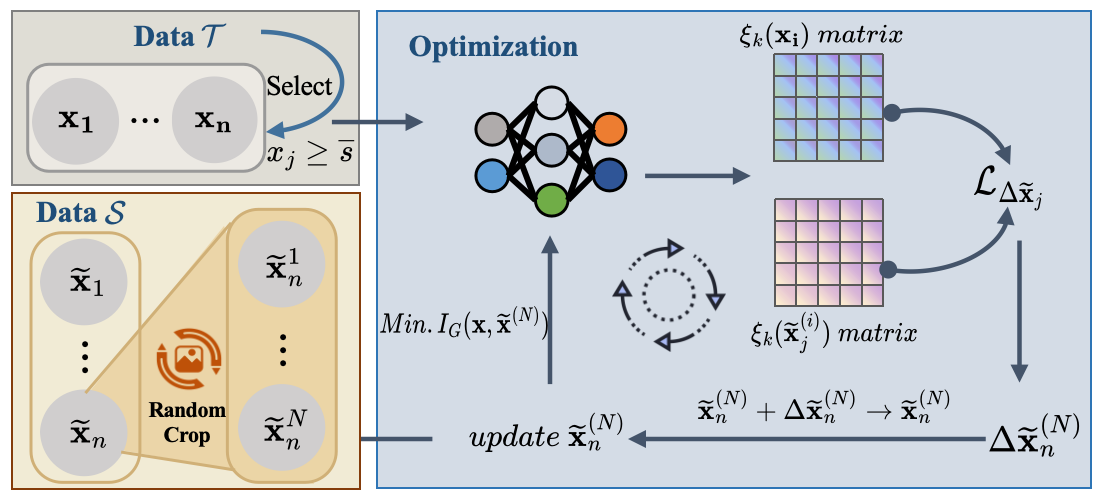}
   \vspace{-10pt}
    \caption{
    \textbf{Distillation Process of Our \algopt}.
    First, $\IPC$ subsets are selected from the original data $\cT$, where each subset contains images denoted as $\{ \xx_j \}_{j=1}^N$;
    Then, for each image $\widetilde{\xx}$ in the initial distilled data $\cS$, the \texttt{RandomCrop} is applied to generate views $\{ \widetilde{\xx}^n \}_{n=1}^N$.
    The information gap $I_G(\xx_j,  \widetilde{\xx}^n)$ is then minimized for each view. This process is iteratively performed for every distilled image $\widetilde{\xx}$;
    }
    \label{fig:method}
    \vspace{-15pt}
\end{wrapfigure}

\noindent{\bf Compressing information into distilled set.}
However, directly minimizing the information gap between distilled set $\cS_X$ and original set $\cD_X$ through~\defref{def:ei_gap} is intractable, due to \eqref{eq:ei_gap} is a sample-level metric that cannot be directly applied to the set.
Thanks to the widely adapted data augmentation techniques $\cA$ to enhance the diversity of each sample $\widetilde{\xx}$ in practice,
we relax the estimation of \eqref{eq:ei_gap} through calculating the effective information gap between a set of $N$ original samples $\{ \xx_i \}_{i=1}^N$ and a distilled sample $\widetilde{\xx}_j$ using $N$ augmented views of the distilled samples, i.e.,
\looseness=-1
\begin{equation}
    \label{eq:dis_goal}
    \begin{aligned}
        \E_{(\xx_i,\widetilde{\xx}_j^{(i)}) \sim (\{ \xx_i \}_{i=1}^N, \cA(\widetilde{\xx}_j))} I_G(\xx_i, \widetilde{\xx}_j^{(i)};\cR) 
        \textup{s.t.} \quad \cA(\widetilde{\xx}_j) = \{\widetilde{\xx}_j^{(1)}, \widetilde{\xx}_j^{(2)}, \ldots, \widetilde{\xx}_j^{(N)}\}.
    \end{aligned}
\end{equation}
However, we still cannot directly distill sample $\widetilde{\xx}_j$ through \eqref{eq:dis_goal} due to the intractable KL divergence estimation in \eqref{eq:ei_gap}, and thus we derive \thmref{thm:bound_ei_gap} that bounds \eqref{eq:ei_gap} to enable it computationally tractable.
\looseness=-1

\begin{theorem}[Pairwise Effective Information Gap]
    \label{thm:bound_ei_gap}
    Let $\xx_i$ and $\xx_j$ represent two samples from any domain, and given an observer group $\cR = \{\xi_j\}$.
    The effective information gap is upper-bounded as
    \looseness=-1
    \begin{equation}
        \begin{aligned}
            I_G(\xx_i, \xx_j;\cR) \leq \E_{\xi_k \sim \cR}{\| \xi_k(\xx_i) -  \xi_k(\xx_j) \|^2} 
            \textup{s.t.} \quad k \in [1, |\cR|] \,.
        \end{aligned}
    \end{equation}    
\end{theorem}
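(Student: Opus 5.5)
To make the KL divergence in \eqref{eq:ei_gap} tractable, we first fix a concrete density model for the effective information of \defref{def:ei}. The set $\cP_{\xx_i|\cR}$ is a finite collection of feature vectors, so we smooth it into a proper distribution. For each observer $\xi_k$ we place an isotropic Gaussian $\cN(\xi_k(\xx_i), \sigma^2 I)$ on its output. We then treat $\cP_{\xx_i|\cR}$ as the mixture $\frac{1}{|\cR|}\sum_k \cN(\xi_k(\xx_i), \sigma^2 I)$, with uniform weights because $\xi_k \sim \cR$ is drawn uniformly. The same construction, with identical weights and variance, defines $\cP_{\xx_j|\cR}$. We will choose the normalization $2\sigma^2 = 1$, so that the final constant is exactly one; any other choice only rescales the bound.

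The core step is to pair the mixture components by observer index and apply joint convexity of the KL divergence. Write both distributions as mixtures over the same index set $k$ with the same weights $\frac{1}{|\cR|}$. Joint convexity of $\textup{D}_{\textup{KL}}$ (equivalently, the log-sum inequality, or the chain rule on the joint law of $(k,z)$ followed by dropping the shared marginal of $k$) gives
\begin{equation*}
\textup{D}_{\textup{KL}}(\cP_{\xx_i|\cR}\,\|\,\cP_{\xx_j|\cR}) \le \frac{1}{|\cR|}\sum_{k} \textup{D}_{\textup{KL}}\bigl(\cN(\xi_k(\xx_i),\sigma^2 I)\,\|\,\cN(\xi_k(\xx_j),\sigma^2 I)\bigr).
\end{equation*}

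Next we evaluate each term with the closed form for the KL divergence between Gaussians of equal covariance, which is $\|\mu_1-\mu_2\|^2/(2\sigma^2)$. This gives $\|\xi_k(\xx_i)-\xi_k(\xx_j)\|^2/(2\sigma^2)$. With $2\sigma^2=1$, the average over $k$ is exactly $\E_{\xi_k\sim\cR}\|\xi_k(\xx_i)-\xi_k(\xx_j)\|^2$, which proves \thmref{thm:bound_ei_gap}. If $\sigma$ is left free, the bound holds up to the constant factor $1/(2\sigma^2)$. That factor is irrelevant when the bound is used as a minimization objective, but we will state it explicitly.

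We expect the main obstacle to be the modeling step, not the inequalities. \defref{def:ei} defines $\cP_{\xx_i|\cR}$ only as a set of features, and the raw empirical measures of two different samples are mutually singular, so their KL divergence is infinite. The proof therefore has to justify the Gaussian-smoothed, observer-indexed mixture interpretation, with each observer treated as a noisy feature extractor. It must also use the same index correspondence for both samples, since without that pairing joint convexity cannot be applied term by term. We will state this as the working interpretation of \defref{def:ei}, after which the remaining steps are standard.
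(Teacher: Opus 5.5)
Your proof is correct, and it takes a genuinely different route from the paper's; unlike the paper's, it actually establishes the inequality.

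\textbf{Shared starting point.} Both you and the paper smooth the feature set of \defref{def:ei} with a Gaussian kernel density estimate. The paper uses unit bandwidth, with kernel $\exp(-\|\cdot\|^2/2)$.

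\textbf{The paper's route.} It proceeds through a chain of heuristic steps:
\begin{enumerate}
\item It bounds $\textup{D}_{\textup{KL}}$ by $\E_z\left|\log \cP_{\xx_i|\cR}(z)-\log \cP_{\xx_j|\cR}(z)\right|$.
\item It evaluates each sample's density estimate at that sample's own observer outputs, so the two log-densities are not taken at a common point $z$, as KL requires.
\item It approximates the log of the kernel sum by the sum of the logs.
\item It expands $\|\xi_k(\cdot)-\xi_l(\cdot)\|^2$ into norms and inner products, drops the inner-product terms as $o(1)$, and absorbs another sum into a constant $\lambda$.
\item It proposes $\E_{\xi_k}\|\xi_k(\xx_i)-\xi_k(\xx_j)\|^2$ as an ``alternative'' quantity to minimize in place of $(|\cR|-1)\,\E_{\xi_k}\bigl|\|\xi_k(\xx_i)\|^2-\|\xi_k(\xx_j)\|^2\bigr|$. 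However, $\|a-b\|^2$ does not dominate $\bigl|\|a\|^2-\|b\|^2\bigr|$ in general.
\end{enumerate}
This chain motivates the matched-feature objective, but it does not literally prove the stated inequality.

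\textbf{Your route.} You pair mixture components by observer index, apply joint convexity (or the chain rule on $(k,z)$), and use the closed-form Gaussian KL. This gives an exact bound with explicit constant $1/(2\sigma^2)$, which buys several things:
\begin{itemize}
\item It shows that the stated constant $1$ requires $\sigma^2\ge 1/2$. With the paper's own unit-bandwidth kernel you even get a factor $1/2$.
\item It extends verbatim to non-uniform or infinite observer families, such as those generated by the transformation group $\cG$ in the loss.
\item It explains directly why the same-observer distance is the natural surrogate.
\end{itemize}
The paper's route does bring in within-sample relational structure, namely distances between different observers' outputs. It gains no rigorous control from it, though.

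\textbf{Modeling step.} Keep your plan of stating the smoothed, uniformly weighted, index-paired mixture explicitly as your reading of \defref{def:ei}. As you note, without smoothing the KL between the mutually singular empirical measures is infinite, so the statement is meaningless as written.
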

Therefore, we can capture a distilled sample $\widetilde{\xx}_j$ through combining \eqref{eq:dis_goal} and \thmref{thm:bound_ei_gap}, namely,
\begin{equation}\small
    \label{eq:sobj}
    \resizebox{.9\hsize}{!}{$
        \argmin_{\widetilde{\xx}_j} \E_{(\xx_i,\widetilde{\xx}_j^{(i)}) \sim (\{ \xx_i \}_{i=1}^N, \cA(\widetilde{\xx}_j))}\E_{\xi_k \sim \cR}{\| \xi_k(\xx_i) -  \xi_k(\widetilde{\xx}_j^{(i)}) \|^2}
    $}
\end{equation}

To reduce computational complexity, we utilize \texttt{RandomCrop} to generate the augmentations $\cA(\widetilde{\xx}_j)$ and concatenate resized real images to initialize distilled sample $\widetilde{\xx}_j$. We further consider a specific scenario where the observer group consists of only one pre-trained model across various transformations\footnote{
    In the context of image data, these transformations encompass a range of nonlinear and linear operations, including techniques for image data augmentation.
}.
This strategy allows us to employ multiple observers without requiring an excessive number of pre-trained models.
\looseness=-1
Our loss function is defined as follows to achieve \eqref{eq:sobj}:
\begin{equation}
    \label{eq:loss}
    \begin{aligned}
        \cL_{\Delta \widetilde{\xx}_j} = \E_{(\xx_i,\widetilde{\xx}_j^{(i)}) \sim (\{ \xx_i \}_{i=1}^N, \cA(\widetilde{\xx}_j+ \Delta \widetilde{\xx}_j))}
        \E_{\zeta_k \sim \cG} \left\| \zeta_k \circ \phi_{\mtheta_\cT}(\xx_i) -  \zeta_k \circ \phi_{\mtheta_\cT}(\widetilde{\xx}_j^{(i)}) \right\|^2
    \end{aligned}
\end{equation}
where $\cR = \{ \xi_k \mid \xi_k = \zeta_k \circ \phi_{\mtheta_\cT} , \forall \zeta_k \sim \cG \}$ and $\cG$ denotes the transformation group, $\cA(\widetilde{\xx}_j + \Delta \widetilde{\xx}_j)=\{\widetilde{\xx}_j^{(1)}, \widetilde{\xx}_j^{(2)}, \ldots, \widetilde{\xx}_j^{(N)}\}$.
By minimizing \eqref{eq:loss}, we find the optimal $\Delta \widetilde{\xx}_j^\star$ and capture the image $\widetilde{\xx}_j \gets \widetilde{\xx}_j + \Delta \widetilde{\xx}_j^\star$ as the distilled image.

\vspace{5pt}
\noindent{\bf Balancing the semantic and textural information.}
Directly generating the distilled image $\widetilde{\xx}_j$ through aligning its effective information with the original image set $\{ \xx_i \}_{i=1}^N$ in the last layer of model $\phi_{\mtheta_\cT}$ may lead to a substantial loss of texture information.
The reason is that the model $\phi_{\mtheta_\cT}$ tends to extract semantic information from input images, which often results in a notable drop in the texture details.
To balance between semantic richness and texture preservation in the distilled image $\widetilde{\xx}_j$, we leverage intermediate model features instead of the last-layer logits, which helps in retaining more textural details (see Section~\ref{sec:featalign} for the validation of its robustness).
\looseness=-1

\vspace{5pt}
\noindent {\bf \Relabel across Various Transformations.}
\label{sec:relabel}
We propose an improved re-labeling strategy for our informative distilled images $\widetilde{\xx}$, which provides more diverse and informative knowledge in the label space compared to the basic one-shot labeling approach. The standard \Relabel technique~\cite{yin2023squeeze}, inspired by~\cite{yun2021re}, suggests that a random image crop may contain a different object than the one originally labeled, leading to inaccurate or misleading training data. This highlights the limitations of the one-shot labeling strategy in expressing sufficient knowledge for cropped images.

Our extended \Relabel approach considers a wider range of image transformations beyond random cropping. Similar to the soft labeling approach in \cite{shen2022fast}, we generate transformed-view-level soft label $\widetilde{y}_k = \phi_{\mtheta_\cT} ( \zeta_k (\widetilde{\xx}) )$, where $\zeta_k (\widetilde{\xx})$ represents the $k$-th transformation applied to the distilled image $\widetilde{\xx}$.
\looseness=-1

Therefore, we can train the model $\phi_{\mtheta_\cS}$ on the distilled data by minimizing:
\begin{equation}
    \textstyle
    \cL = - \sum_{j}{\sum_{k}{\| {\phi_{\mtheta_\cS}( \zeta_k(\widetilde{\xx}_{j}) )}}} - \widetilde{y}_{(j,k)} \|^2 \,.
\end{equation}

The whole distillation process for an entire dataset by using our \algopt is illustrated in \darkredtext{Alg. 1} in Appendix.

\section{Experiment}\label{sec:exp}
In this section, we evaluate the performance of our proposed \algopt over various datasets and neural architectures.
First, we demonstrate the superior results of \algopt on real-world datasets, cross-architecture generalization and efficiency.
We next perform comprehensive ablation studies to evaluate the impact of each component in our proposed method, as well as to analyze the influence of hyperparameter choices and subset selection strategies. Finally, we demonstrate the superior performance of our approach, \algopt, in continual learning applications.

\subsection{Experimental Setting} \label{sec:expset}

\noindent{\bf Datasets and neural network architectures.}
We conduct experiments on varying scales and resolutions of images.
\begin{itemize}[leftmargin=12pt]
    \item \textbf{Small-scale:} we evaluate on two datasets, including CIFAR-10 ($32 \times 32$) \cite{krizhevsky2009cifar} and CIFAR-100 ($32 \times 32$) \cite{krizhevsky2009learning}.
    \item \textbf{Large-scale:} we also use two large-scale high-resolution datasets including Tiny-ImageNet ($64 \times 64$) \cite{le2015tiny} and ImageNet-1K ($224 \times 224$) \cite{deng2009imagenet}.
\end{itemize}
Following prior dataset distillation works \cite{yin2023squeeze,zhao2023improved,guo2023towards}, we employ  ConvNet \cite{guo2023towards}, ResNet-18 \cite{he2016deep}, and MobileNet-V2 \cite{sandler2018mobilenetv2}, ViT-T/16 \cite{dosovitskiy2021image}, ShuffleNet-v2-x2.0 \cite{ma2018shufflenet}, DenseNet-121 \cite{huang2018densely}, as our backbone networks.
Specifically, for ConvNet, we use Conv-3 on CIFAR-10/100 and use Conv-4 on Tiny-ImageNet and ImageNet-1K.
More details about the used datasets and architectures can be found in \darkredtext{App. G}.

\vspace{5pt}
\noindent{\bf Baselines.}
We compare our method with several SOTA distillation methods that can scale to large high-resolution datasets, including G-VBSM \cite{shao2023generalized}, SRe$^2$L \cite{yin2023squeeze}, RDED \cite{sun2024diversity}, CDA \cite{yin2023dataset},  WMDD \cite{liu2024dataset}, Teddy \cite{yu2024teddy}, CUDD \cite{du2024diversity}, EDC \cite{shao2024elucidating}, DWA~\cite{du2024diversity}, CV-DD~\cite{cui2025dataset}, INFER~\cite{zhang2024breaking}, GIFT \cite{shang2024gift}, NRR-DD \cite{tran2025enhancing}, DELT \cite{shen2024delt} .
The results of additional baseline methods, including DataDAM \cite{sajedi2023datadam}, ADD \cite{zhang2023accelerating}, IDM \cite{zhao2023improved}, CDA \cite{yin2023dataset}, WMDD \cite{liu2024dataset}, DATM \cite{guo2023towards}, DREAM \cite{liu2023dream}, and FreD \cite{shin2023frequency}, are presented in \darkredtext{App. G}. Comprehensive details regarding these approaches are also provided in the same appendix.

\vspace{5pt}
\noindent{\bf Implementation details.}
By default, \algopt employs the subset selection mechanism of RDED. Notably, the proposed \algopt is selection-method-agnostic and can seamlessly integrate alternative selection strategies.
All distilled datasets synthesized from these baselines are evaluated using the same post-training process. 
All the hyper-parameters used in our \darkredtext{Alg. 1} are general, insensitive, and easy-implemented for all datasets and network architectures (c.f.\ ~\secref{sec:ablationstudy} and \darkredtext{App. I} for validation).
We employ a generalized configuration for $\cT^\prime$ (c.f.\ \darkredtext{App. C} for definition), where the size of subset $|\cT^\prime|$ is set as $300$.
We set the number $N=4$ of images squeezed in a distilled image and number $M=200$ of compression iteration (c.f.\ ~\darkredtext{Alg. 1} for definition).
More implementation details are provided in \darkredtext{App. G}.

\begin{table*}[!t]
    \centering
      \caption{\small
        \textbf{Comparison with baseline approaches on CIFAR-10 (``CF-10'') and CIFAR-100 (``CF-100'').}
        In the table, \textbf{bold} indicates the best result.
        \underline{Underline} means the second-best result.
        \IPC refers to the Images Per Class within distilled datasets. 
    }
    \vspace{-8pt}
    \label{tb:main_cifar}
    \setlength\tabcolsep{2pt}
    \resizebox{1.\textwidth}{!}{
        \begin{tabular}{@{}cc|cccc|cccc|cccc@{}}
            \toprule[1.5pt]
            \multicolumn{2}{c|}{}   & \multicolumn{4}{c|}{ConvNet} & \multicolumn{4}{c|}{ResNet-18} & \multicolumn{4}{c}{ResNet-50}                                                                                           \\ \midrule
            Dataset                 & IPC                          & G-VBSM                         & SRe2L                         & RDED                    & \algopt (Ours)          & G-VBSM         & SRe2L          & RDED           & \algopt (Ours)          & G-VBSM         & SRe2L          & RDED                   & \algopt (Ours)          \\ \midrule
            \multirow{3}{*}{CF-10}  & 1                            & 21.2 $\pm$ 0.2                 & 22.2 $\pm$ 1.1                & \underline{28.9 $\pm$ 0.4}          & \textbf{37.4 $\pm$ 0.4} & 17.0 $\pm$ 1.0 & 19.9 $\pm$ 0.9 & \underline{27.8 $\pm$ 0.4} & \textbf{32.3 $\pm$ 0.2} & 17.2 $\pm$ 0.9 & 20.2 $\pm$ 0.5 & \underline{24.8 $\pm$ 0.5}         & \textbf{31.8 $\pm$ 0.7} \\
                                    & 10                           & 38.6 $\pm$ 0.8                 & 39.6 $\pm$ 0.4                & \underline{56.0 $\pm$ 0.1}          & \textbf{61.4 $\pm$ 0.4} & 36.3 $\pm$ 0.7 & 39.4 $\pm$ 0.9 & \underline{47.3 $\pm$ 0.5} & \textbf{66.2 $\pm$ 0.9} & 33.8 $\pm$ 1.1 & 37.2 $\pm$ 0.6 & \underline{45.1 $\pm$ 0.7}         & \textbf{62.9 $\pm$ 0.3} \\
                                    & 50                           & 62.7 $\pm$ 0.5                 & 57.7 $\pm$ 0.4                & \underline{71.1 $\pm$ 0.2}          & \textbf{74.7 $\pm$ 0.2} & 64.5 $\pm$ 0.6 & 62.8 $\pm$ 1.2 & \underline{76.4 $\pm$ 0.4} & \textbf{85.1 $\pm$ 0.3} & 61.5 $\pm$ 0.6 & 61.6 $\pm$ 0.2 & \underline{74.1 $\pm$ 0.6}         & \textbf{84.2 $\pm$ 0.3} \\ \midrule
            \multirow{3}{*}{CF-100} & 1                            & 13.4 $\pm$ 0.3                 & 12.9 $\pm$ 0.1                & \underline{21.8 $\pm$ 0.4}          & \textbf{27.4 $\pm$ 0.4} & \underline{13.4 $\pm$ 0.5} & 11.5 $\pm$ 0.4 & 4.6 $\pm$ 0.1  & \textbf{31.1 $\pm$ 0.7} & \underline{12.6 $\pm$ 0.6} & 10.1 $\pm$ 0.1 & 4.5 $\pm$ 0.2          & \textbf{28.1 $\pm$ 0.2} \\
                                    & 10                           & 38.7 $\pm$ 0.8                 & 34.2 $\pm$ 0.3                & \underline{47.0 $\pm$ 0.3}          & \textbf{49.3 $\pm$ 0.3} & 47.0 $\pm$ 0.4 & 42.7 $\pm$ 0.5 & \underline{53.4 $\pm$ 0.3} & \textbf{61.7 $\pm$ 0.2} & 47.5 $\pm$ 0.5 & 44.2 $\pm$ 0.5 & \underline{54.0 $\pm$ 0.3}         & \textbf{62.9 $\pm$ 0.4} \\
                                    & 50                           & 53.8 $\pm$ 0.4                 & 52.2 $\pm$ 0.3                & \textbf{55.3 $\pm$ 0.2} & \underline{55.1 $\pm$ 0.2}          & 60.0 $\pm$ 0.1 & 57.4 $\pm$ 0.2 & \underline{64.0 $\pm$ 0.0} & \textbf{67.0 $\pm$ 0.1} & 62.2 $\pm$ 0.3 & 60.6 $\pm$ 0.2 & \underline{65.8 $\pm$ 0.3}         & \textbf{67.9 $\pm$ 0.1} \\ \bottomrule[2pt]
        \end{tabular}        }
\end{table*}

\subsection{Comparison with the SOTA Methods}
\label{sec:comparesota}

\noindent{\bf Small-scale datasets.}
Following previous research~\cite{cazenavette2022dataset,cui2023scaling,zhao2023improved}, we set \IPC to 1, 10, and 50 to compare with baselines on varying datasets and networks.
As the results reported in \tabref{tb:main_cifar}, \textit{our method \algopt outperforms other methods on varying datasets and neural networks with different \IPC.}
It is noteworthy that prior information extraction-based solutions like SRe$^2$L struggle in scenarios involving small distilled datasets such as CIFAR-100 with $\IPC = 1$ or CIFAR-10 with all \IPC, further verifying our claims proposed in \secref{sec:motivation}.
Detailed comparison among more datasets and baselines can be found in \darkredtext{App. H}.
Furthermore, we visualize the distilled images of various methods in \darkredtext{App. K}.

\begin{table*}[!t]
\centering
\caption{\small Comparison with the state-of-the-art dataset distillation methods on Tiny-ImageNet (``TN-IN'') and ImageNet-1K (``IN-1K'') on ResNet-18.}
\vspace{-8pt}
\setlength\tabcolsep{5.2pt}
\resizebox{1\textwidth}{!}{
\begin{tabular}{@{}lc|cccccccccc@{}}
\toprule[2pt]
Dataset& IPC & SRe2L & G-VBSM  & CDA & WMDD & RDED & Teddy & GIFT & NRR-DD &  DELT & \algopt(Ours)\\
\midrule
\multirow{3}{*}{TN-IN}
&1  &13.5 $\pm$ 0.2 &9.0 $\pm$ 0.1  &- &7.6 $\pm$ 0.2 &15.4 $\pm$ 0.6 & -& \underline{15.9 $\pm$ 0.3} & 13.5 $\pm$ 0.2 &  9.3 $\pm$ 0.5 &\textbf{25.1 $\pm$ 0.4}\\
&10&43.6 $\pm$ 0.5 &37.7 $\pm$ 0.3 &- &41.8 $\pm$ 0.1 &48.4 $\pm$ 0.3 & - &  \underline{49.2 $\pm$ 0.1}  & 45.2 $\pm$ 0.2 & 43.0 $\pm$ 0.1 &\textbf{53.3$\pm$0.1}\\
&50&53.4 $\pm$ 0.3 &52.2 $\pm$ 0.0 &48.7 $\pm$ 0.1 &\underline{59.4 $\pm$ 0.5} &57.4 $\pm$ 0.2 & 55.2 $\pm$ 0.1 & 58.1 $\pm$ 0.1 & \textbf{61.2 $\pm$ 0.1} & 55.7 $\pm$ 0.5 &58.6 $\pm$ 0.1\\
\midrule
\multirow{4}{*}{IN-1K}
&1  &2.8 $\pm$ 0.2  &2.1 $\pm$ 0.1  &- &3.2 $\pm$ 0.3 &5.9 $\pm$ 0.1 & - & - & \textbf{11.6 $\pm$ 0.2} & - &\underline{7.3$\pm$0.3}\\
&10&31.1 $\pm$ 0.1 &35.7 $\pm$ 0.2 &- &38.2 $\pm$ 0.2 &41.1 $\pm$ 0.2 &34.1 $\pm$ 0.1 &43.2 $\pm$ 0.1 &\underline{46.1 $\pm$ 0.0} &45.8 $\pm$ 0.0 &\textbf{48.7$\pm$0.2}\\
&50&49.5 $\pm$ 0.2 &51.6 $\pm$ 0.2 &53.5 $\pm$ 0.1 &57.6 $\pm$ 0.5 &55.3 $\pm$ 0.2 &52.5 $\pm$ 0.1 &56.5 $\pm$ 0.1 & \underline{60.1 $\pm$ 0.1} &59.2 $\pm$ 0.1 &\textbf{60.4$\pm$0.1}\\
&100&54.3 $\pm$ 0.2 &56.1 $\pm$ 0.1 &58.0 $\pm$ 0.1 &\underline{60.7 $\pm$ 0.0} &58.6 $\pm$ 0.1 &56.5 $\pm$ 0.0 &59.3 $\pm$ 0.2 & - & \textbf{62.4 $\pm$ 0.1} &\textbf{62.4$\pm$0.1}\\
\bottomrule[2pt]
\end{tabular}
}
\label{tab:main_imagenet_resnet18}
\vspace{-10pt}
\end{table*}

\vspace{3pt}
\noindent{\bf Large-scale datasets.}
To evaluate the practicality of \algopt in real-world settings, we further conduct experiments on the large-scale and high-resolution benchmarks Tiny-ImageNet and ImageNet-1K. The corresponding results are reported in \tabref{tab:main_imagenet_resnet18}.
In addition, we compare \algopt with several state-of-the-art methods on ResNet-50, as shown in \tabref{tb:main_imagenet_resnet50}.
Note that we exclude some methods listed in \tabref{tab:main_imagenet_resnet18} from the ResNet-50 comparison, since their distilled datasets are not available on larger networks, making a fair reproduction infeasible.
Furthermore, beyond the commonly used setting of \IPC=50, we also report results under larger budgets (e.g., \IPC=100) to assess scalability.
It is obvious that our \algopt achieves the best performance on most scenarios, demonstrating the effectiveness.

\begin{table}[!t]
    \centering
    \caption{\small Comparison with baselines on ImageNet-1K using ResNet-50.}
\vspace{-8pt}
    \label{tb:main_imagenet_resnet50}
    \renewcommand{\arraystretch}{1}
        \setlength\tabcolsep{2pt}
    \resizebox{\linewidth}{!}{%

        \begin{tabular}{@{}l|ccccccccccc@{}}
            \toprule[1.5pt]
            IPC & G-VBSM & SRe2L & RDED & Teddy & CUDD & CDA & EDC & DWA & INFER & CV-DD & \algopt\ (Ours) \\
            \midrule
            10  & 42.6 $\pm$ 0.4 & 38.3 $\pm$ 0.5 & 46.2 $\pm$ 0.3 & 37.2 $\pm$ 0.1 & 46.2 $\pm$ 0.3 & -- & \underline{54.1 $\pm$ 0.2} & 43.0 $\pm$ 0.1 & 38.3 $\pm$ 0.2 & 51.3 $\pm$ 0.2 & \textbf{54.3 $\pm$ 0.4} \\
            50  & 60.3 $\pm$ 0.1 & 58.4 $\pm$ 0.1 & 62.5 $\pm$ 0.1 & 58.5 $\pm$ 0.1 & 63.6 $\pm$ 0.2 & 61.3 $\pm$ 0.2 & \underline{64.3 $\pm$ 0.2} & 62.3 $\pm$ 0.3 & 63.4 $\pm$ 0.2 & 63.9 $\pm$ 0.1 & \textbf{65.9 $\pm$ 0.1} \\
            100 & 64.1 $\pm$ 0.1 & 62.9 $\pm$ 0.1 & 65.5 $\pm$ 0.1 & -- & \underline{66.7 $\pm$ 0.1} & 65.1 $\pm$ 0.1 & -- & 65.7 $\pm$ 0.1 & -- & -- & \textbf{67.6 $\pm$ 0.1} \\
            \bottomrule[2pt]
        \end{tabular}%
    }
\end{table}

\vspace{3pt}
\noindent{\bf Cross-architecture generalization.}
\label{sec:crossarch}
An important property of the distilled datasets is their good generalization capability across unseen architectural models.
Here we evaluate the generalizability of our distilled datasets when \IPC=10.
As reported in \tabref{tb:crossarch}, \textit{our distilled dataset achieves the best performance on unseen networks}, which reflects the good generalizability of the data and labels distilled by our method.
Our success stems from our \algopt effectively keeps both textural and semantic information in distilled images.

\begin{table*}[!t]
    \setlength\tabcolsep{4.5pt}
    \centering
    \caption{\small
        \textbf{Top-1 accuracy (\%) on ImageNet-1k for cross-architecture generalization.}
        We utilize ResNet-18 for distilling the original dataset. Subsequently, the distilled data is transferred to other architectures, with $\IPC=10$.
    }
    \vspace{-8pt}
    \label{tb:crossarch}
    \resizebox{.95\textwidth}{!}{
        \begin{tabular}{@{}c|cccccc@{}}
            \toprule
            Verifier & EfficientNet-B0         & ResNet-18                          & MobileNet-V2            & ViT-T/16                & ShuffleNet-v2-x2.0      & DenseNet-121            \\ \midrule
            SRe$^2$L & 35.2 $\pm$ 0.0          & 35.0 $\pm$ 0.3                     & 27.6 $\pm$ 0.0          & 3.2 $\pm$ 0.0           & 38.4 $\pm$ 0.3          & 43.2 $\pm$ 0.1          \\
            G-VBSM   & 31.5 $\pm$ 0.0          & \multicolumn{1}{l}{30.2 $\pm$ 1.2} & 26.0 $\pm$ 0.4          & 3.3 $\pm$ 0.1           & 31.8 $\pm$ 0.9          & 39.6 $\pm$ 0.1          \\
            RDED     & 41.1 $\pm$ 0.0          & \multicolumn{1}{l}{38.4 $\pm$ 0.7} & 34.8 $\pm$ 0.0          & 8.5 $\pm$ 0.1           & 43.2 $\pm$ 0.3          & 47.4 $\pm$ 0.3          \\
            Ours     & \textbf{48.5 $\pm$ 0.2} & \textbf{48.7 $\pm$ 0.3}            & \textbf{42.3 $\pm$ 0.2} & \textbf{10.8 $\pm$ 0.0} & \textbf{50.8 $\pm$ 0.2} & \textbf{54.4 $\pm$ 0.2} \\ \bottomrule[1.5pt]
        \end{tabular}
    }
\vspace{-10pt}
\end{table*}

\vspace{3pt}
\noindent{\bf Efficiency comparison.}
\label{sec:efficiencycomp}
Efficiency is also a key factor during the distillation process.
Here, we use a single RTX-4090 GPU for two methods to conduct experiments on Tiny-ImageNet.
The reason why we mainly compare with SRe$^2$L and G-VBSM is based on their outstanding as efficient optimization-based methods currently.
We evaluate the distillation efficiency by recording the run-time cost and peak GPU memory usage of distilling the image.
As evidenced in \tabref{tb:efficiency}, \textit{our \algopt achieves superior efficiency in comparison to SOTA methods}, with the exception of RDED, which benefits from an optimization-free paradigm~\cite{sun2024diversity}, demonstrating a notable advantage of efficacy and efficiency.
Significantly, our algorithm can offer a versatile peak memory capacity, enabling adjustments to batch size dynamically without sacrificing performance.
This efficiency is attributed to the fact that our \darkredtext{Alg. 1} can independently\footnote{
    Traditional distillation techniques necessitate the concurrent synthesis of a set of images to ensure collective quality \cite{yin2023squeeze,cazenavette2022dataset,guo2023towards}.
    This issue stems from the nature of these methods, which involve using a distilled dataset to approximate the original dataset with a similarity metric.
} optimize images, allowing us to distill them one by one.
More comparisons are in \darkredtext{App. H}.

\begin{figure*}[!t]
    \centering
    \begin{subfigure}{0.22\linewidth}
        \includegraphics[width=\linewidth, trim=0 0 0 0, clip]{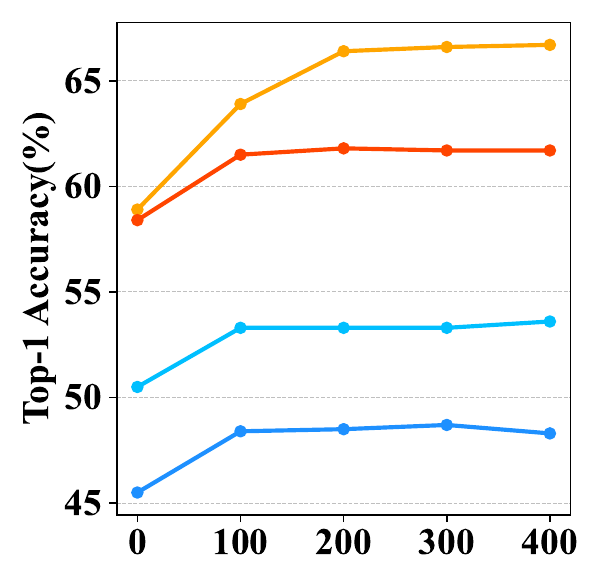}
        \caption{Number $M$}
        \label{fig:iteration}
    \end{subfigure}
    \hfill
    \begin{subfigure}{0.22\linewidth}
        \includegraphics[width=\linewidth, trim=0 0 0 0, clip]{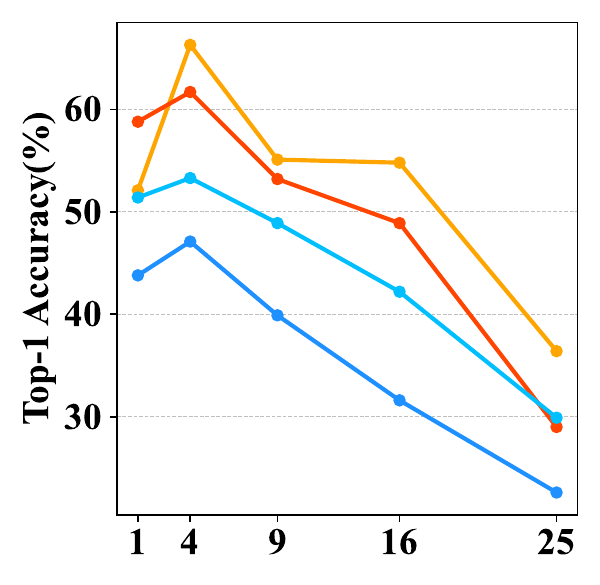}
        \caption{Number $N$}
        \label{fig:squeezenipc10}
    \end{subfigure}
    \hfill
    \begin{subfigure}{0.22\linewidth}
        \includegraphics[width=\linewidth, trim=0 0 0 0, clip]{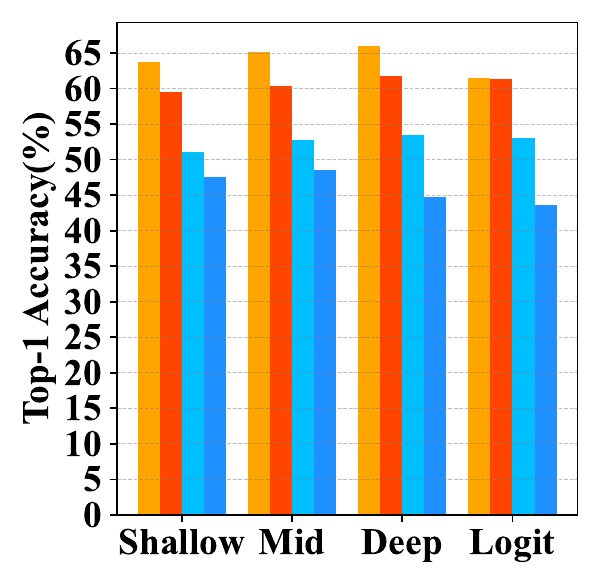}
        \caption{Alignment Layer}
        \label{fig:featurelayer}
    \end{subfigure}
    \hfill
    \begin{subfigure}{0.22\linewidth}
        \includegraphics[width=\linewidth, trim=0 0 0 0, clip]{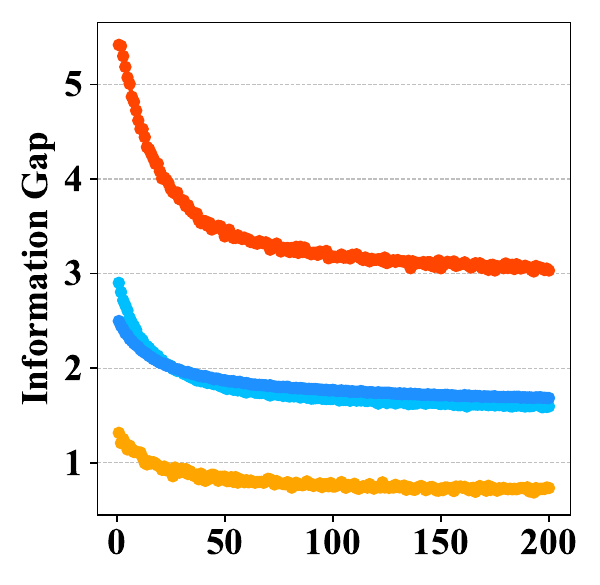}
        \caption{Number $K$}
        \label{fig:Loss}
    \end{subfigure}
    \vspace{-0.5em}
    \caption{\small
        \textbf{Ablation study on each component in our \algopt.}
        We evaluate our \algopt with different number $M$ of compression iterations~(\ref{fig:iteration}), number $N$ of images squeezed in one distilled image~(\ref{fig:squeezenipc10}), feature alignment layer~(\ref{fig:featurelayer}), and the information gap with respect to the number $K$ of iterations(\ref{fig:Loss}).
        The yellow \textcolor{yellow}{$\bullet$}, red \textcolor{red}{$\bullet$}, blue \textcolor{cyan}{$\bullet$} and deep blue \textcolor{blue}{$\bullet$} denote CIFAR-10, CIFAR-100, Tiny-ImageNet and ImageNet-1k respectively.
    }
    \vspace{-8pt}
    \label{fig:peroci}
\end{figure*}

\begin{table*}[!t]
    \centering
        \caption{\small
        \textbf{Efficiency comparison on varying networks.}
        Following SRe$^2$L, Time Cost is the consumption (s) when generating 100 images simultaneously, and the peak value of GPU memory usage is measured with a batch size of 100.
    }
    \label{tb:efficiency}
    \vspace{-8pt}
    \setlength\tabcolsep{7pt}
    \scalebox{0.8}{
        \begin{tabular}{@{}c|cccc|cccc@{}}
            \toprule
            \multirow{2}{*}{Architecture} & \multicolumn{4}{c|}{Time Cost (s)} & \multicolumn{4}{c}{Peak Memory (GB)}                                                   \\ \cmidrule(l){2-9}
                                          & SRe$^2$L                           & G-VBSM                               & RDED  & Ours  & SRe$^2$L & G-VBSM & RDED & Ours \\ \midrule
            Conv-4                        & 51.68                              & 259.84                               & 1.682 & 13.02 & 1.36     & 4.94   & 0.74 & 0.65 \\
            ResNet-18                     & 191.14                             & 259.84                               & 2.78  & 25.34 & 3.62     & 4.94   & 0.92 & 1.56 \\
            MobileNet-V2                  & 114.05                             & 259.84                               & 4.07  & 18.81 & 1.27     & 4.94   & 0.29 & 0.64 \\ \bottomrule[1.5pt]
        \end{tabular}
    }
    \vspace{-8pt}
\end{table*}

\vspace{-10pt}
\subsection{Ablation Study} \label{sec:ablationstudy}
In this section, we set the default $\IPC = 10$ and employ ConvNet as the network backbone to examine how the components used in our \algopt influence the quality of distilled dataset (see \darkredtext{App. I} for more investigation).

\vspace{3pt}
\noindent{\bf Influence of compression iteration number $M$.}
The number of distillation iterations, denoted as $M$, impacts two aspects:
1) A higher $M$ enhances \algopt's ability to generate higher-quality images;
2) A lower $M$ ensures a faster execution of our \darkredtext{Alg. 1}.
Consequently, choosing an optimal iteration number $M$ represents a balance between quality and speed.
As illustrated in \figref{fig:iteration}, an iteration count of $M=200$ offers a well-rounded compromise for various datasets.
Additionally, it is noteworthy that \textit{our \algopt exhibits robustness to variations in $M$.} Specifically, setting $M$ beyond 200 yields negligible differences in performance.

\vspace{3pt}
\noindent{\bf Influence of size of compressed images $N$.}
Though we can compress more original images $\{\xx_i\}_{i=1}^N$ into each distilled image $\widetilde{\xx}$ by increasing $N$ to benefit the feature diversity (c.f. \secref{sec:method}), it also results in less information preserved from each original image $\xx_i$.
\figref{fig:squeezenipc10} showcase that \textit{the validation performance rises to the highest on selected four datasets when $N=4$.}

\vspace{3pt}
\noindent{\bf Why and how to choose the feature alignment layer?}
\label{sec:featalign}
The experimental results in \figref{fig:featurelayer} illustrate the impact of the feature alignment layer, alongside the discussion in \secref{sec:method}.
As depicted in \figref{fig:featurelayer},  \textit{high performance is often achieved through alignment at the middle layer.}
This outcome likely stems from the varied information encoded at different network depths: shallow layers capture more textural details, whereas logit and deep layers are more adept at encoding semantic information.
Thus, aligning at the middle layer enables the distilled dataset to achieve an optimal balance of these information types.

\vspace{3pt}
\noindent{\bf Influence of iteration number $K$ on the information gap.}
\figref{fig:Loss} illustrates the effect of iteration count $K$ on the information gap.
An increased iteration count $K$ effectively reduces the information gap, enabling the model to capture and retain more features from the original images in the distilled dataset.
However, as $K$ nears 200, further iterations offer minimal gains.
This trend suggests that an appropriate choice of $K$ balances efficient information retention with computational cost across various datasets.

\vspace{3pt}
\noindent{\bf Influence of subset selection.}
Importantly, the selection mechanism is not intrinsic to our framework. To assess alternative strategies, we replaced RDED's selection mechanism with approaches such as Random Selection, K-means Clustering~\cite{forgy1965cluster}, and Herding~\cite{welling2009herding}. A ResNet-18 model was trained on datasets distilled using these various selection strategies, with results summarized in \tabref{tab:selection_ipc10}. Notably, the results indicate that even with Random Selection, our framework achieves competitive performance. Additional results for the case where \IPC = 1 are provided in \darkredtext{Tab. 15} in Appendix.

\begin{figure*}[!t]
    \centering
    \begin{subfigure}{0.32\linewidth}
        \includegraphics[width=\linewidth]{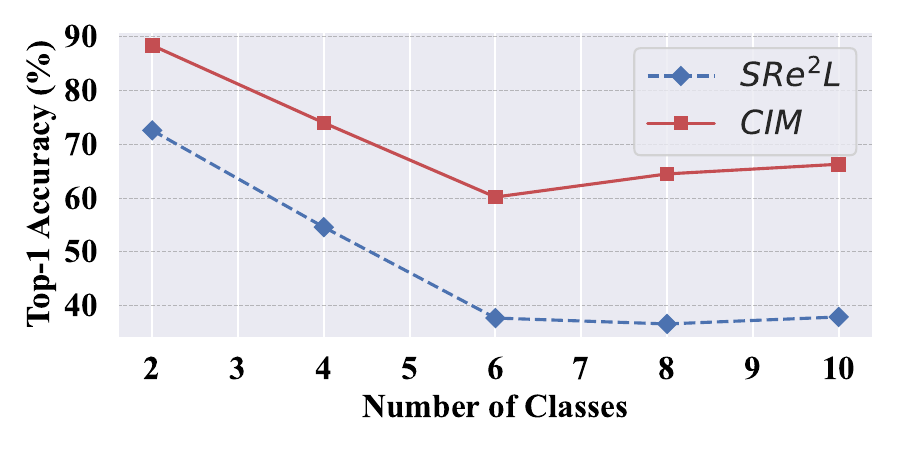}
        \caption{CIFAR-10}
    \end{subfigure}
    \begin{subfigure}{0.32\linewidth}
        \includegraphics[width=\linewidth]{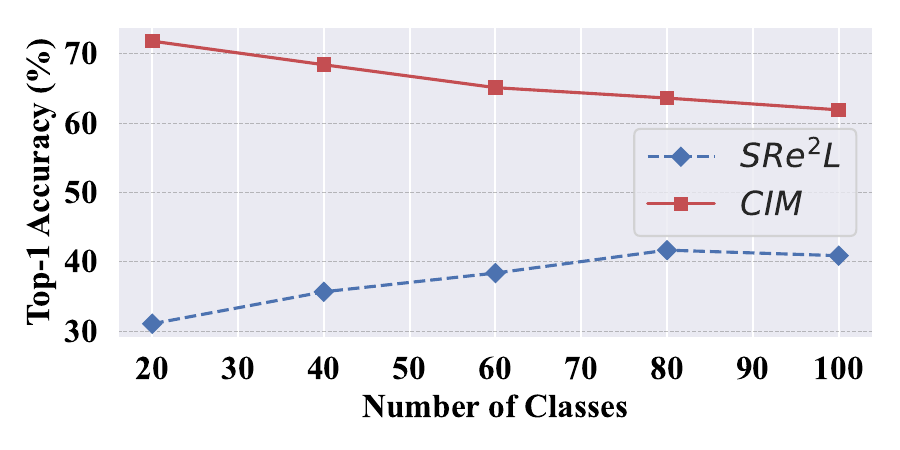}
        \caption{CIFAR-100}
    \end{subfigure}
    \begin{subfigure}{0.32\linewidth}
        \includegraphics[width=\linewidth]{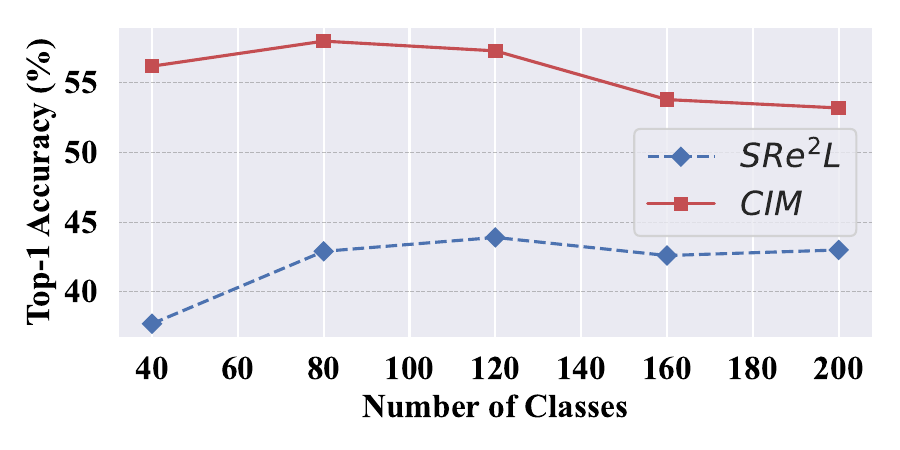}
        \caption{Tiny-ImageNet}
    \end{subfigure}
    \vspace{-10pt}
    \caption{Application of continual learning on various datasets when \IPC=10}
    \label{fig:continual_learning}
\end{figure*}

\begin{table}[t]
    \centering
        \caption{\textbf{Comparison of \algopt using various selection strategies with $\IPC = 10$.} By default, \algopt uses the RDED selection method.}
    \label{tab:selection_ipc10}
    \vspace{-8pt}
     \setlength\tabcolsep{6.5pt}
    \resizebox{.65\textwidth}{!}{
    \begin{tabular}{@{}l|cccc@{}}
        \toprule
                      & Random         & K-means        & Herding        & Ours           \\ \midrule
        CIFAR-10      & 64.5 $\pm$ 0.2 & 65.5 $\pm$ 1.2 & 66.2 $\pm$ 0.8 & \textbf{66.6 $\pm$ 0.1} \\
        CIFAR-100     & 60.4 $\pm$ 0.3 & 60.4 $\pm$ 0.2 & 61.0 $\pm$ 0.1 & \textbf{61.8 $\pm$ 0.1} \\
        Tiny-ImageNet & 51.4 $\pm$ 0.5 & 51.3 $\pm$ 0.2 & 51.3 $\pm$ 0.2 & \textbf{53.2 $\pm$ 0.1} \\
        ImageNet-1k   & 45.3 $\pm$ 0.0 & 44.8 $\pm$ 0.3 & 45.3 $\pm$ 0.1 & \textbf{48.6 $\pm$ 0.2} \\ \bottomrule[1.5pt]
    \end{tabular}}
    \vspace{-13pt}
\end{table}

\vspace{3pt}
\noindent{\bf Application: Continual learning.}
Following prior studies \cite{yin2023squeeze,zhao2023dataset} that leverage synthetic datasets in continual learning to assess the quality of synthetic
data, we employ the GDumb framework \cite{prabhu2020gdumb} for continual learning setup. 
In comparison to SRe$^2$L, our study implements a 5-step class-incremental learning approach using ResNet-18 across CIFAR-10, CIFAR-100, and Tiny-ImageNet datasets, with \IPC = 10.
The results of these experiments are depicted in \figref{fig:continual_learning}.
It is evident that \textit{our results substantially improve upon the baseline methods.}

\section{Conclusion}
In this paper, we demonstrate that the primary limitation of current state-of-the-art dataset distillation methods at various scales is the issue of huge information loss.
To address this, we introduce the \algopt technique, which effectively compresses critical data from images into distilled forms with minimal information loss.
Our extensive experiments reveal that \algopt markedly surpasses existing SOTA dataset distillation techniques across a range of dataset sizes and network architectures.
Additionally, we highlight the efficiency of \algopt by showcasing its ability to distill the ImageNet-1k dataset in just 80 minutes.

\section*{Acknowledgment}
This work was supported in part by the National Science and Technology Major Project (No. 2022ZD0115101), Research Center for Industries of the Future (RCIF) at Westlake University, Westlake Education Foundation, and Westlake University Center for High-performance Computing.

%
%

\bibliographystyle{configuration/splncs04}
\bibliography{resources/main}

\clearpage
\appendix

\section{Limitations}
\label{sec:lim}
Although our \algopt significantly outperforms existing SOTA methods, its primary limitation, as discussed in Section~\ref{sec:comparesota}, is that it cannot surpass the optimization-free paradigm (e.g., RDED~\cite{sun2024diversity}) in terms of efficiency. This constraint limits its applicability in diverse real-world scenarios.

\section{Proof of Proposition~\ref{prop:subop_label}}

\begin{proof}
    \textbf{Define the optimal classification model $f_{opt}$}: Here, we aim to classify the two distributions $\cN_1^s(\mu_1 + s_1, \sigma^2)$ and $\cN_2^s(\mu_2 + s_2, \sigma^2)$ using a decision boundary. Given that both distributions have the same variance, the optimal decision boundary will be linear, and we can employ Bayesian decision theory. The decision boundary is given by the condition:

    \begin{equation}
        P(\cN_1^s|x) = P(\cN_2^s|x)
    \end{equation}

    Applying Bayes' rule, we can transform the above equation into a form of log likelihood ratio:

    \begin{equation}
        \log\left( \frac{P(x | \cN_1^s) P( \cN_1^s )}{P(x | \cN_2^s) P(\cN_2^s)} \right) = 0
    \end{equation}

    Given
    \begin{equation}
        P(x| \cN_i^s) = \frac{1}{\sqrt{2\pi\sigma^2}}e^{-\frac{(x - (\mu_i + s_i))^2}{2\sigma^2}} \, ,
    \end{equation}
    we can further simplify the equation. Note here we assume the prior probabilities $P(\cN_1^s)$ and $P(\cN_2^s)$ are equal. If they are not equal, we would need to consider these prior probabilities. For simplicity, we assume they are equal here. Thus, we can ignore the prior probabilities and focus on the likelihood ratio:
    \begin{equation}
        \frac{(x - (\mu_1 + s_1))^2}{2\sigma^2} = \frac{(x - (\mu_2 + s_2))^2}{2\sigma^2}
    \end{equation}
    Expanding and simplifying the above equation, we can find the value of $x$, which will be the decision boundary for classification:
    \begin{align}
        x(\mu_1 + s_1 - \mu_2 - s_2) & = \frac{(\mu_1 + s_1)^2 - (\mu_2 + s_2)^2}{2}                            \\
        x                            & = \frac{(\mu_1 + s_1)^2 - (\mu_2 + s_2)^2}{2(\mu_1 + s_1 - \mu_2 - s_2)} \\
        x                            & = \frac{(\mu_1 + s_1 + \mu_2 + s_2)}{2}
    \end{align}
    This value of $x$ defines the position of the linear decision boundary $f_{opt}$. This result provides an explicit expression for the optimal linear decision boundary, allowing us to classify based on the means and variances of the two distributions.
\end{proof}

\section{Selecting Key Subsets from Original Dataset}
\label{sec:keysamples}
Our goal is to devise an effective method for identifying the crucial samples that are instrumental in benefiting the \Relabel process.
Motivated by the Summary in Section~\ref{sec:motivation} and Proposition~\ref{prop:subop_label}---which suggests that each chosen sample pair $(\xx, y)$ is supposed to receive an informative and accurate label $\phi_{\mtheta_\cT}(\xx)$ from the pre-trained model during the \Relabel phase---the goal then relaxes to find the subsets of $\cT$ that include samples that are relabeled most accurately by the pre-trained model $\phi_{\mtheta_\cT}$.
Thus, we introduce a loss-based importance score $s$ for each sample pair $(\xx, y)$, defined as $s = -\ell(\phi_{\mtheta_\cT}(\xx), y)$.
The key sample selection procedure is elaborated below.

Let $\cT_c := \{ (\xx, y) \mid (\xx, y) \in \cT, y = c \}$ represents the subset of the dataset $\cT$, containing only those samples $(\xx, y)$ that are labeled with the class $c$.
For each class data $\cT_c$, we identify the key samples $\xx$ based on their importance scores $s$, forming $\IPC$ subsets of key samples as
\begin{equation}
    \begin{aligned}
        \cS_c = \left\{ \{ \xx_{(j,i)}, y_{(j,i)} \}_{j=1}^N \right\}_{i=1}^{\IPC} \\
        \textup{s.t.} \;\; s_{(j,i)} = -\textup{CE}(\phi_{\mtheta_\cT}(\xx_{(j,i)}), y_{(j,i)}) \geq \bar{s} \,,
    \end{aligned}
\end{equation}
where $\bar{s}$ denotes a predetermined threshold\footnote{
    This threshold is used for selecting $\IPC \times N$ samples for further processing.
}, $N$ indicates the number of samples within each selected subset, and CE denotes the \texttt{CrossEntropyLoss}.
We further simplify the whole procedure due to the computational overhead and diversity issue, as explained below
\begin{enumerate}[nosep, leftmargin=12pt]
    \item computing the scores $s$ for all samples $\xx$ in data $\cT_c$ presents a significant computational challenge;
    \item focusing solely on samples that closely align with the true label can lead to a lack of diversity.
\end{enumerate}
Therefore, we utilize a pre-selection strategy inspired by \cite{sun2024diversity}, which involves selecting a subset\footnote{
    We use the default size of this uniform-randomly selected subset in \cite{sun2024diversity}. see details in Section~\ref{sec:exp}.
} $\cT_c^\prime \subset \cT_c$ uniformly at random to serve as a proxy for the entire $\cT_c$.
Such a pre-selection strategy not only promotes diversity in the data but also lessens the computational load~\cite{sun2024diversity}, thereby laying the groundwork for our subsequent score-based sample selection process.

\newpage
\section{Proof of Theorem~\ref{thm:bound_ei_gap}}

We initiate our derivation by examining the Kullback-Leibler divergence between the effective information distributions of $x _i$ and $\hat{x} _i$ as observed by the group $\mathcal{R}$:

\begin{align}
    &\text{D} _\text{KL}(\cP _{x _i | \mathcal{R}} || \cP _{\hat{x} _i | \mathcal{R}}) = \mathbb{E} _z  \left[ \log \frac{\cP _{x _i | \mathcal{R}}(z)}{\cP _{\hat{x} _i | \mathcal{R}}(z)}\right] 
    &= \mathbb{E} _z  \left[\log \cP _{x _i | \mathcal{R}}(z) - \log \cP _{\hat{x} _i | \mathcal{R}}(z)\right] \\ \nonumber
    &\leq \mathbb{E} _z \left[| \log \cP _{x _i | \mathcal{R}}(z) - \log \cP _{\hat{x} _i | \mathcal{R}}(z) |\right] \, .
\end{align}

By applying kernel density estimation to $p(\cdot)$, we obtain:

\begin{align}
    &\mathbb{E} _z \left[| \log \cP _{x _i | \mathcal{R}}(z) - \log \cP _{\hat{x} _i | \mathcal{R}}(z) | \right]\\
    &=\mathbb{E} _{\xi _j} \left[| \log \sum _{\xi _k}\left[\frac{1}{(2\pi)^{\frac{1}{d}}} \exp(\frac{- \| \xi _j(x _i) - \xi _k(x _i) \|^2}{2})\right] \right. \\
    & \qquad  \qquad  \left. - \log \sum _{\xi _k}{\left[\frac{1}{(2\pi)^{\frac{1}{d}}} \exp(\frac{- \| \xi _j(\hat{x} _i) - \xi _k(\hat{x} _i) \|^2}{2})\right]} |\right]\, ,
\end{align}

where $d$ denotes the dimension of $x$ and a primary term exists within $\log(\cdot)$. Therefore, we approximate:

\begin{align}
    &\approx \mathbb{E}_{\xi_j} \left[ \left| \sum_{\xi_k} \left[ \log\left(\frac{1}{(2\pi)^{\frac{1}{d}}} 
    \exp\left(\frac{- \| \xi_j(x_i) - \xi_k(x_i) \|^2}{2}\right)\right)\right] \right. \right. \\
    & \quad - \left. \left. \sum_{\xi_k} \left[ \log\left(\frac{1}{(2\pi)^{\frac{1}{d}}} 
    \exp\left(\frac{- \| \xi_j(\hat{x}_i) - \xi_k(\hat{x}_i) \|^2}{2}\right)\right) \right] \right| \right] \\
    &= \mathbb{E}_{\xi_j} \left[ \left| \sum_{\xi_k} \left[ \frac{- \| \xi_j(x_i) - \xi_k(x_i) \|^2}{2} \right] - \sum_{\xi_k} \left[ \frac{- \| \xi_j(\hat{x}_i) - \xi_k(\hat{x}_i) \|^2}{2} \right] \right| \right] \\
    &= \mathbb{E}_{\xi_j} \left[ \left| \sum_{\xi_k} \left[ \frac{- \| \xi_j(x_i) - \xi_k(x_i) \|^2}{2} \right] - \sum_{\xi_k} \left[ \frac{- \| \xi_j(\hat{x}_i) - \xi_k(\hat{x}_i) \|^2}{2} \right] \right| \right] \\
    &\leq \mathbb{E}_{\xi_j} \left[ \sum_{\xi_k} \left[ \left| \frac{- \| \xi_j(x_i) - \xi_k(x_i) \|^2}{2} - \frac{- \| \xi_j(\hat{x}_i) - \xi_k(\hat{x}_i) \|^2}{2} \right| \right] \right] \\
    &\leq \mathbb{E}_{\xi_j} \left[ \sum_{\xi_k} \left[ \left| \| \xi_j(\hat{x}_i) - \xi_k(\hat{x}_i) \|^2  - \| \xi_j(x_i) - \xi_k(x_i) \|^2 \right| \right] \right] \\
    &= \mathbb{E}_{\xi_j} \left[ \sum_{\xi_k \neq \xi_j} \left[ \left| \| \xi_j(\hat{x}_i) - \xi_k(\hat{x}_i) \|^2  - \| \xi_j(x_i) - \xi_k(x_i) \|^2 \right| \right] \right] \\
    &= \mathbb{E}_{\xi_j} \Bigg[ \sum_{\xi_k \neq \xi_j} \Bigg( \left| \| \xi_j(\hat{x}_i) \|^2 
    - 2\langle \xi_j(\hat{x}_i), \xi_k(\hat{x}_i) \rangle \right. + \| \xi_k(\hat{x}_i) \|^2 - \| \xi_j(x_i) \|^2 \\
    & \quad + 2\langle \xi_j(x_i), \xi_k(x_i) \rangle - \| \xi_k(x_i) \|^2 \Bigg| \Bigg) \Bigg] \\
    &\leq (|\mathcal{R}|-1) \cdot \mathbb{E}_{\xi_j} \left[ \left| \| \xi_j(\hat{x}_i) \|^2 - \| \xi_j(x_i) \|^2 \right| \right] + \sum_{\xi_k \neq \xi_j} \left[ \left| \| \xi_k(\hat{x}_i) \|^2 - \| \xi_k(x_i) \|^2 \right| \right] \\
    & \quad + \mathbb{E}_{\xi_j} \left[ \sum_{\xi_k \neq \xi_j} \left[ \left| 2\langle \xi_j(x_i), \xi_k(x_i) \rangle - 2\langle \xi_j(\hat{x}_i), \xi_k(\hat{x}_i) \rangle \right| \right] \right] \\
    &\approx (|\mathcal{R}|-1) \cdot \mathbb{E}_{\xi_j} \left[ \left| \| \xi_j(\hat{x}_i) \|^2 - \| \xi_j(x_i) \|^2 \right| \right] + \sum_{\xi_k \neq \xi_j} \left[ \left| \| \xi_k(\hat{x}_i) \|^2 - \| \xi_k(x_i) \|^2 \right| \right] + o(1) \\
    &\approx (|\mathcal{R}|-1) \cdot \mathbb{E}_{\xi_j} \left[ \left| \| \xi_j(\hat{x}_i) \|^2 - \| \xi_j(x_i) \|^2 \right| \right] 
    & \quad + \lambda
\end{align}

To minimize this term, we can use an alternative one:

\begin{equation}
    \mathbb{E} _{\xi _j} \left[ \| \xi _j(\hat{x} _i) - \xi _j(x _i)\|^2  \right] \, .
\end{equation}

\section{Detailed Analysis for Existing Information Extraction-Based Approaches}
\label{sec:analy_existing_works}

\paragraph{Results without relabel.}

We evaluate the data with and without employing relabeling. The results are presented in Table~\ref{tab:lic_no_relabel}.
The visualizations of the images synthesized by state-of-the-art methods are presented in Section~\ref{sec:appendix_visualization}.

\begin{table*}[]
    \centering
        \caption{Testing was conducted with/without the application of relabeling technique. The experiment utilized the ResNet-18 model, with an \IPC value set to 10.}
    \label{tab:lic_no_relabel}
    \resizebox{1.\textwidth}{!}{
        \begin{tabular}{@{}c|cccc|cccc@{}}
            \toprule
            Relabel       & \multicolumn{4}{c|}{Without Relabel} & \multicolumn{4}{c}{With Relabel}                                                                                                                \\ \midrule
            Dataset       & G-VBSM                            & SRe$^2$L             & RDED           & Ours           & G-VBSM         & SRe$^2$L & RDED           & Ours           \\ \midrule
            CIFAR-10      & 32.3 $\pm$ 0.7                    & 10.9 $\pm$ 0.5                      & 35.2 $\pm$ 0.6 & 51.6 $\pm$ 0.6 & 36.3 $\pm$ 0.7 & 39.4 $\pm$ 0.9          & 47.3 $\pm$ 0.5 & 66.2 $\pm$ 0.9 \\
            CIFAR-100     & 10.2 $\pm$ 0.2                    & 1.2 $\pm$ 0.2                       & 21.5 $\pm$ 0.3 & 40.7 $\pm$ 0.3 & 47.0 $\pm$ 0.4 & 42.7 $\pm$ 0.5          & 53.4 $\pm$ 0.3 & 61.7 $\pm$ 0.2 \\
            Tiny-ImageNet & 0.6 $\pm$ 0.1                     & 0.6 $\pm$ 0.0                       & 14.7 $\pm$ 0.4 & 27.0 $\pm$ 0.5 & 37.7 $\pm$ 0.3 & 43.6 $\pm$ 0.5          & 48.4 $\pm$ 0.3 & 53.3 $\pm$ 0.1 \\
            ImageNet-1k   & 0.8 $\pm$ 0.1                     & 1.1 $\pm$ 0.0                       & 19.7 $\pm$ 0.3 & 22.0 $\pm$ 1.8 & 35.7 $\pm$ 0.2 & 31.1 $\pm$ 0.1          & 41.1 $\pm$ 0.2 & 48.7 $\pm$ 0.2 \\ \bottomrule[1.5pt]
        \end{tabular}
    }
\end{table*}

\section{Detailed Framework of Our \algopt}
The structure of our \algopt framework is outlined in Algorithm~\ref{alg:framework}.


\begin{algorithm}[!t]
    \caption{\small
        An efficient framework for dataset distillation.
    }\label{alg:framework}
    \begin{algorithmic}
        \STATE {\bfseries Input:} Original full dataset $\cT$, a corresponding pre-trained observer model $\phi_{\theta_{\cT}}$ and initial $\cS = \emptyset$.
        \STATE {\bfseries Parameters:} The number $N$ for squeezing images, the number $M$ of compression iterations, the size of $\cT_c^\prime$.
        \FOR{$\cT_c^\prime \subset \cT_c \subset \cT$}
        \STATE \COMMENT{\textcolor{red}{\textbf{Stage 1. } Selecting Subsets of Key Samples}}
        \FOR{$(\xx_i, y_i) \in \cT_c^\prime$}
        \STATE Calculate $s_i = -\ell(\phi_{\mtheta_\cT}(\xx_i), y_i)$
        \ENDFOR
        \STATE Select top-($N \times \IPC$) images $\{ \xx_i \}_{i=1}^{N \times \IPC}$ via $s_i$
        \STATE \COMMENT{\textcolor{red}{\textbf{Stage 2. } Compressing Effective Information}}
        \FOR{$j=1$ {\bfseries to} \IPC}
        \STATE Initialize distilled images as $\widetilde{\xx}_j$
        \FOR{$m=1$ {\bfseries to} $M$}
        \STATE $\Delta \xx \leftarrow \Delta \xx - \nabla_{\Delta \xx} \cL_{\Delta \xx}$
        \ENDFOR
        \STATE \COMMENT{\textcolor{red}{\textbf{Stage 3. } \Relabel}}
        \STATE Relabel $\widetilde{\xx}_j \gets \widetilde{\xx}_j + \Delta \xx^\star$ with $\widetilde{y}_j$
        \STATE $\cS=\cS \cup \{(\widetilde{\xx}_j, \widetilde{y}_j)\}$
        \ENDFOR
        \ENDFOR
        \STATE {\bfseries Output:} Small distilled dataset $\cS$
    \end{algorithmic}
\end{algorithm}

\section{Experiment Details}
\label{sec:appendix_details}

\paragraph{Datasets.}
In addition to the datasets described in Section~\ref{sec:expset}, we note that prevalent dataset distillation techniques struggle to scale to large, high-resolution datasets.
In Table~\ref{tab:datasets}, we present some information about the dataset, including the number of classes, the number of images per class in the training set, and the test set.
\begin{table*}[]
    \centering
        \caption{Details about the datasets}
    \label{tab:datasets}
    \begin{tabular}{@{}c|ccc@{}}
        \toprule
        Dataset       & Num of Classes & IPC of Training Set & IPC of Test Set \\ \midrule
        CIFAR-10      & 10             & 5000            & 1000           \\
        CIFAR-100     & 100            & 500             & 100            \\
        Tiny-ImageNet & 200            & 500             & 50             \\
        ImageNet-1k   & 1000           & 732 - 1300      & 50             \\ \bottomrule[1.5pt]
    \end{tabular}
\end{table*}

\paragraph{Models.}

The experiment employed a multitude of pre-trained models, and we delineated their accuracies in Table~\ref{tab:pretrained_accuracy}. These results are furnished solely for reference purposes.
\begin{table}[]
    \centering
        \caption{Accuracy of pre-trained models.}
    \label{tab:pretrained_accuracy}
    \begin{tabular}{@{}c|ccc@{}}
        \toprule
        Dataset                        & Model              & Size             & Accuracy \\ \midrule
        \multirow{2}{*}{CIFAR-10}      & ResNet-18 & 32 $\times$ 32   & 93.86    \\
                                       & Conv-3          & 32 $\times$ 32   & 82.24    \\ \midrule
        \multirow{2}{*}{CIFAR-100}     & ResNet-18 & 32 $\times$ 32   & 72.27    \\
                                       & Conv-3          & 32 $\times$ 32   & 61.27    \\ \midrule
        \multirow{2}{*}{Tiny-ImageNet} & ResNet-18 & 64 $\times$ 64   & 61.98    \\
                                       & Conv-4          & 64 $\times$ 64   & 49.73    \\ \midrule
        \multirow{2}{*}{ImageNet-1k}   & ResNet-18           & 224 $\times$ 224 & 69.31    \\
                                       & Conv-4          & 64 $\times$ 64   & 43.6     \\ \bottomrule[1.5pt]
    \end{tabular}
\end{table}

\paragraph{Baselines.}

We benchmark our proposed \algopt against a range of SOTA distillation techniques capable of handling large, high-resolution datasets.
\begin{itemize}[nosep, leftmargin=12pt]
    \item G-VBSM \cite{shao2023generalized} surpasses one-sided methods like SRe$^2$L by creating synthetic datasets with richer information and better generalization across \emph{various backbones, layers, and statistics}.
    \item SRe$^2$L \cite{yin2023squeeze} is a novel entrant that \emph{efficiently handles ImageNet-1K}, significantly outpacing other methods in managing large, high-resolution datasets and serving as our primary comparison point.
    \item RDED \cite{sun2024diversity} enables the compression of large-scale, high-resolution datasets while maintaining diversity and realism, \emph{significantly reducing the time required} for training neural networks like ResNet-18 on ImageNet-1K.
    \item DataDAM \cite{sajedi2023datadam} \emph{efficiently distills images across multiple resolutions and scales} by matching spatial attention maps between real and distilled samples at various layers within families of randomly initialized neural networks.
    \item ADD \cite{zhang2023accelerating} demonstrates \emph{effective scalability across varying dataset resolutions}, enhancing distillation speed through model augmentation.
    \item IDM \cite{zhao2023improved} presents an efficient dataset condensation technique utilizing distribution matching, offering a scalable alternative to computationally demanding optimization-focused methods \cite{zhao2020dataset,cazenavette2022dataset}.
    \item CDA \cite{yin2023dataset} achieves superior accuracy on large-scale datasets like ImageNet-1K and 21K and significantly narrows the performance gap compared to full-data training counterparts.
    \item WMDD \cite{liu2024dataset} presents a novel dataset distillation approach that \emph{employs the Wasserstein distance to enhance distribution matching}, achieving state-of-the-art performance by effectively capturing the essential representations of extensive datasets in synthetic forms.
    \item DATM \cite{guo2023towards} stands out by occasionally \emph{surpassing the training performance of the full original dataset}, for instance, achieving $\IPC=100$ on CIFAR-100.
    \item DREAM \cite{liu2023dream} introduces an \emph{efficient technique while also delivering the most remarkable results}.
    \item FreD \cite{shin2023frequency} is a novel parameterization method for dataset distillation that \emph{operates in the frequency domain}, significantly reducing the budget for synthesizing a small-sized synthetic dataset while preserving the original dataset's information and consistently improving the performance of existing distillation methods.
\end{itemize}

\paragraph{Evaluating main results.}

For both dataset distillation and performance evaluation, we employ identical neural network architectures.
Consistent with previous studies \cite{cazenavette2022dataset, cui2023scaling, zhao2023improved}, we use Conv-3 for CIFAR-10 and CIFAR-100 distillation tasks and Conv-4 for Tiny-ImageNet (with the exception of DREAM, which utilizes Conv-3) and ImageNet-1K distillation. In line with \cite{cazenavette2022dataset, cui2023scaling}, MTT and TESLA apply a reduced resolution for distilling $224 \times 224$ images. According to \cite{yin2023squeeze}, for retrieving and evaluating distilled datasets, SRe$^2$L and \algopt adopt ResNet-18.

\paragraph{Evaluating the distilled dataset.}

We detail the hyperparameter configurations for distilling datasets in Table~\ref{tab:settings_distill}.
Consistent with recent works~\cite{yin2023dataset,yin2023squeeze,shao2023generalized}, the evaluation on the distilled dataset follows the parameters outlined in Table~\ref{tab:settings_eval}.
Furthermore, we implement Differentiable Siamese Augmentation (DSA) as described by \cite{zhao2021DSA} to enhance images during both the distillation and evaluation phases of our experiments.

\begin{table*}[t]
    \centering
        \caption{Hyperparameter setting.}
    \label{tab:settings}
    \begin{subtable}[b]{1\textwidth}
        \centering
        \caption{Data Synthesis.}
        \begin{tabular}{@{}l|ll@{}}
            \toprule
            Config         & Value    & Explanation                                                                                     \\ \midrule
            Iteration      & 200      & NA                                                                                              \\
            Optimizer      & AdamW    & $\beta_{1}$, $\beta_{2}$=(0.9, 0.999)                                                           \\
            Learning Rate  & 0.01     & NA                                                                                              \\
            Initialization & RDED     & Initialized using images from training dataset                                                  \\
            Factor         & 2        & NA                                                                                              \\
            Mipc           & 300      & NA                                                                                              \\
            Depth          & Deep/Mid & \begin{tabular}[c]{@{}l@{}}Deep for ConvNet and ResNet(modified),\\ Mid for ResNet\end{tabular} \\ \bottomrule[1.5pt]
        \end{tabular}
        \label{tab:settings_distill}
    \end{subtable}
    \begin{subtable}[b]{1\textwidth}
        \centering
        \caption{Evaluation.}
        \begin{tabular}{@{}l|ll@{}}
            \toprule
            Config        & Value         & Explanation                                                                                                                                                                                                                             \\ \midrule
            Epochs        & 300/1000      & \begin{tabular}[c]{@{}l@{}}300 for ImageNet-1k,\\ 1000 for default\end{tabular}                                                                                                                                                         \\
            Optimizer     & AdamW         & NA                                                                                                                                                                                                                                      \\
            Learning Rate & 0.001         & NA                                                                                                                                                                                                                                      \\
            Batch Size    & 10/50/100/200 & \begin{tabular}[c]{@{}l@{}}10 for 0 $\textless$ Num of Images $\leq$ 10,\\ 50 for 10 $\textless$ Num of Images $\leq$ 500,\\ 100 for 500 $\textless$ Num of Images $\leq$ 20000,\\ 200 for 20000 $\textless$ Num of Images\end{tabular} \\
            Scheduler     & MultiStepLR   & \begin{tabular}[c]{@{}l@{}}milestones=[2 $\times$ epochs // 3, 5 $\times$ epochs // 6]\\ gamma=0.2\end{tabular}                                                                                                                         \\
            Augmentation  & DSA strategy  & color, crop, cutout, flip, scale, rotate                                                                                                                                                                                                \\ \bottomrule[2pt]
        \end{tabular}
        \label{tab:settings_eval}
    \end{subtable}
\end{table*}

\paragraph{Differentiable Siamese Augmentation (DSA).}
We employ DSA (Differentiable Siamese Augmentation) as a method for image augmentation. To enhance clarity, we outline the DSA operations utilized in Table~\ref{tab:dsa_strategy}, along with their corresponding transformations and probabilities.


\begin{table}[h]
    \centering
    \caption{Differentiable Siamese Augmentation(DSA) and ratios}
        \label{tab:dsa_strategy}
    \begin{tabular}{@{}l|ll@{}}
        \toprule[2pt]
        DSA    & Transform              & Ratio                                                                                  \\ \midrule
        Color  & Color Jitter           & \begin{tabular}[c]{@{}l@{}}Brightness=1.0\\ Saturation=2.0\\ Contrast=0.5\end{tabular} \\
        Crop   & Random Crop            & Crop Pad=0.125                                                                         \\
        Cutout & Random Cutout          & Cutout=0.5                                                                             \\
        Flip   & Random Horizontal Flip & Flip=0.5                                                                               \\
        Scale  & Random Scale           & Scale=1.2                                                                              \\
        Rotate & Random Rotation        & Rotate=15.0                                                                            \\ \bottomrule[2pt]
    \end{tabular}
\end{table}

\begin{figure}[h]
    \centering
    \includegraphics[width=0.5\textwidth]{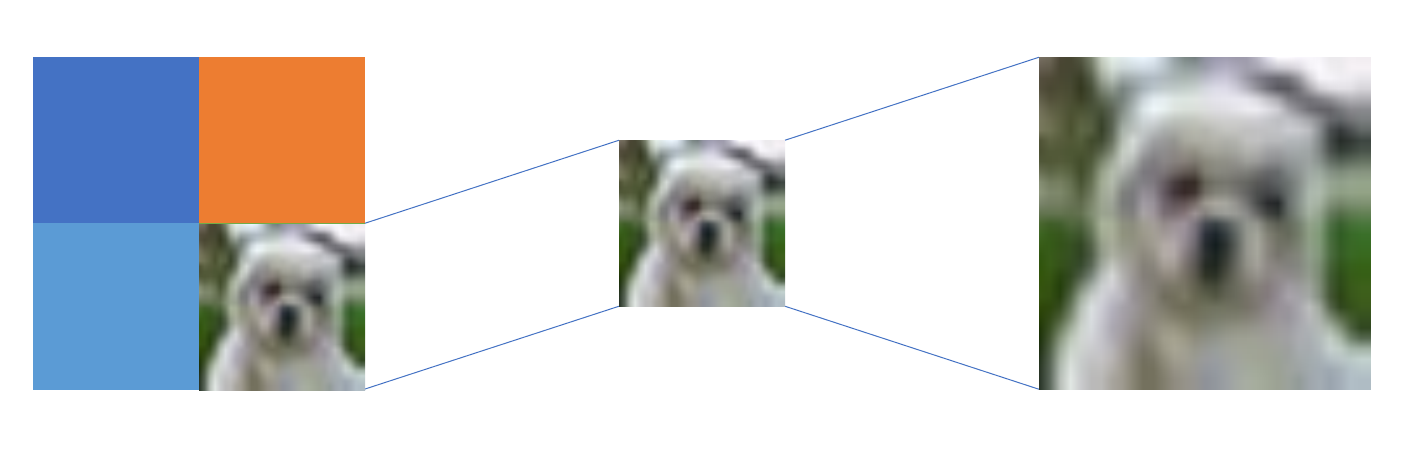}
    \caption{Visualization of factor crop.}
    \label{fig:factor_crop}
\end{figure}

\paragraph{Factor Crop.}
We have integrated a cropping method termed 'factor crop', which is applied prior to DSA. This technique enhances sample diversity by precisely extracting regions from specific areas of the images and resizing them to their original dimensions. As an augmentation method, it bolsters the model's generalizability, as depicted in Figure~\ref{fig:factor_crop}. This approach functions as a substitute for \texttt{RandomCrop}, thereby preserving the semantic integrity of compressed images within the distilled dataset.




\section{Experiment Results}
\label{sec:appendix_results}

\paragraph{Comparison with more datasets and baselines.}

In addition to the experiments discussed in Section~\ref{sec:comparesota}, we further benchmark our proposed \algopt against a broader set of baselines, encompassing recent contributions~\cite{yu2023dataset,shao2023generalized,sun2023diversity,liu2023dataset}.
The outcomes, presented in Table~\ref{tab:results_subset_imagenet} and Table~\ref{tab:results_reproduced}, consistently affirm the superior performance of \algopt in dataset distillation tasks.

\begin{table*}[h]
    \centering
    \caption{\small
    Comparison with more SOTA dataset distillation methods.
    }
    \setlength\tabcolsep{5.2pt}
    \resizebox{1\textwidth}{!}{
        \begin{tabular}{@{}cc|ccccc|ccccc@{}}
            \toprule[2pt]
            \multicolumn{2}{c|}{Architecture} & \multicolumn{5}{c|}{ConvNet}                                                                     & \multicolumn{5}{c}{ResNet-18}                                                                    \\ \midrule
            Dataset              & IPC        & DataDAM        & ADD  & IDM            & RDED                    & \algopt (Ours) & G-VBSM         & CDA  & WMDD                    & RDED           & \algopt (Ours) \\ \midrule
                                & 1          & 32.0 $\pm$ 1.2 & 49.2 & 45.6 $\pm$ 0.7 & 23.5 $\pm$ 0.3          & \textbf{37.4 $\pm$ 0.4}       & -              & -    & -                       & 22.9 $\pm$ 0.4 & \textbf{32.3 $\pm$ 0.2}       \\
            CIFAR-10             & 10         & 54.2 $\pm$ 0.8 & 67.1 & 58.6 $\pm$ 0.1 & 50.2 $\pm$ 0.3          & \textbf{61.4 $\pm$ 0.4}       & 53.5 $\pm$ 0.6 & -    & -                       & 37.1 $\pm$ 0.3 & \textbf{66.2 $\pm$ 0.9}       \\
                                & 50         & 67.0 $\pm$ 0.4 & 73.8 & 67.5 $\pm$ 0.1 & 68.4 $\pm$ 0.1          & \textbf{74.7 $\pm$ 0.2}       & 59.2 $\pm$ 0.4 & -    & -                       & 62.1 $\pm$ 0.1 & \textbf{85.1 $\pm$ 0.3}       \\ \midrule
                                & 1          & 14.5 $\pm$ 0.5 & 29.8 & 20.1 $\pm$ 0.3 & 19.6 $\pm$ 0.3          & \textbf{27.4 $\pm$ 0.4}       & 25.9 $\pm$ 0.5 & -    & -                       & 11.0 $\pm$ 0.3 & \textbf{31.1 $\pm$ 0.7}       \\
            CIFAR-100            & 10         & 34.8 $\pm$ 0.5 & 45.6 & 45.1 $\pm$ 0.1 & 48.1 $\pm$ 0.3          & \textbf{49.3 $\pm$ 0.3}       & 59.5 $\pm$ 0.4 & -    & -                       & 42.6 $\pm$ 0.2 & \textbf{61.7 $\pm$ 0.2}       \\
                                & 50         & 49.4 $\pm$ 0.3 & 52.6 & 50.0 $\pm$ 0.2 & \textbf{57.0 $\pm$ 0.1} & 55.1 $\pm$ 0.2                & 65.0 $\pm$ 0.5 & -    & -                       & 62.6 $\pm$ 0.1 & \textbf{67.0 $\pm$ 0.1}       \\ \midrule
                                & 1          & 8.3 $\pm$ 0.4  & -    & 10.1 $\pm$ 0.2 & 12.0 $\pm$ 0.1          & \textbf{25.4 $\pm$ 0.2}       & -              & -    & 7.6 $\pm$ 0.2           & 9.7 $\pm$ 0.4  & \textbf{25.1 $\pm$ 0.4}       \\
            Tiny ImageNet        & 10         & 18.7 $\pm$ 0.3 & -    & 21.9 $\pm$ 0.2 & 39.6 $\pm$ 0.1          & \textbf{42.2 $\pm$ 0.2}       & -              & -    & 41.8 $\pm$ 0.1          & 41.9 $\pm$ 0.2 & \textbf{53.3 $\pm$ 0.1}       \\
                                & 50         & 28.7 $\pm$ 0.3 & -    & 27.7 $\pm$ 0.3 & \textbf{47.6 $\pm$ 0.2} & 47.0 $\pm$ 0.2                & -              & 48.7 & \textbf{59.4 $\pm$ 0.5} & 58.2 $\pm$ 0.1 & 58.6 $\pm$ 0.1                \\ \midrule
                                & 1          & 2.0 $\pm$ 0.1  & -    & -              & -                       & \textbf{5.8 $\pm$ 0.1}        & -              & -    & 3.2 $\pm$ 0.3           & 6.6 $\pm$ 0.2  & \textbf{7.3 $\pm$ 0.3}        \\
            ImageNet-1k          & 10         & 6.3 $\pm$ 0.0  & -    & -              & -                       & \textbf{24.5 $\pm$ 0.1}       & 31.4 $\pm$ 0.5 & -    & 38.2 $\pm$ 0.2          & 42.0 $\pm$ 0.1 & \textbf{48.7 $\pm$ 0.2}       \\
                                & 50         & 15.5 $\pm$ 0.2 & -    & -              & -                       & \textbf{37.8 $\pm$ 0.1}       & 51.8 $\pm$ 0.4 & 53.5 & 57.6 $\pm$ 0.5          & 56.5 $\pm$ 0.1 & \textbf{60.4 $\pm$ 0.1}       \\ \bottomrule[2pt]
        \end{tabular}
    }
    \label{tab:results_subset_imagenet}
\end{table*}

\begin{table*}[]
    \centering
        \caption{Comparison with other SOTA dataset distillation methods that we reproduced.}
    \label{tab:results_reproduced}
    \resizebox{0.75\textwidth}{!}{
        \begin{tabular}{@{}cccccc@{}}
            \toprule
            \multicolumn{2}{c}{Architecture} & \multicolumn{4}{c}{ConvNet}                                                                                    \\ \midrule
            Dataset                          & \multicolumn{1}{c|}{IPC}    & DATM           & DREAM          & FreD           & \algopt (Ours) \\ \midrule
                                             & \multicolumn{1}{c|}{1}      &                & 40.8 $\pm$ 1.2 & 33.2 $\pm$ 1.2 & \textbf{50.0 $\pm$ 0.3}       \\
            CIFAR-10                         & \multicolumn{1}{c|}{10}     & 60.1 $\pm$ 0.3 & 64.2 $\pm$ 0.1 & 46.6 $\pm$ 0.6 & \textbf{72.2 $\pm$ 0.2}       \\
                                             & \multicolumn{1}{c|}{50}     & 63.1 $\pm$ 1.0 & 72.2 $\pm$ 0.1 & 46.3 $\pm$ 0.4 & \textbf{78.3 $\pm$ 0.1}       \\ \midrule
                                             & \multicolumn{1}{c|}{1}      & -              & 22.4 $\pm$ 0.4 & 15.5 $\pm$ 0.2 & \textbf{42.7 $\pm$ 0.3}       \\
            CIFAR-100                        & \multicolumn{1}{c|}{10}     & 27.7 $\pm$ 0.3 & 41.3 $\pm$ 0.6 & -              & \textbf{54.8 $\pm$ 0.2}       \\
                                             & \multicolumn{1}{c|}{50}     & 42.8 $\pm$ 0.2 & 48.3 $\pm$ 0.2 & -              & \textbf{56.6 $\pm$ 0.1}       \\ \midrule
                                             & \multicolumn{1}{c|}{1}      & -              & -              & 5.8 $\pm$ 0.2  & \textbf{31.7 $\pm$ 0.4}       \\
            Tiny ImageNet                    & \multicolumn{1}{c|}{10}     & 28.8 $\pm$ 0.2 & -              & -              & \textbf{46.3 $\pm$ 0.2}       \\
                                             & \multicolumn{1}{c|}{50}     & 36.1 $\pm$ 0.1 & -              & -              & \textbf{47.4 $\pm$ 0.1}       \\ \midrule
                                             & \multicolumn{1}{c|}{1}      & -              & -              & -              & \textbf{14.5 $\pm$ 0.3}       \\
            ImageNet-1k                      & \multicolumn{1}{c|}{10}     & -              & -              & -              & \textbf{24.0 $\pm$ 0.5}       \\
                                             & \multicolumn{1}{c|}{50}                          & -              & -              & -              & \textbf{37.3 $\pm$ 0.1}       \\ \bottomrule[1.5pt]
        \end{tabular}
    }
\end{table*}

\subsection{Efficiency Comparison}


Beyond assessing performance in Section~\ref{sec:efficiencycomp}, we expand our evaluation to include additional baselines.
Results presented in Table~\ref{tab:efficiency} underscore the exceptional efficiency of our proposed \algopt, which also requires the least GPU memory.

\begin{table*}[t]
    \centering
        \caption{\small
    Efficiency comparison with SOTA methods with Conv-4 on Tiny-ImageNet.
    }
    \label{tab:efficiency}
    \begin{tabular}{@{}c|ccc|ccc@{}}
        \toprule
        \multirow{2}{*}{Architecture} & \multicolumn{3}{c|}{Time Cost (s)} & \multicolumn{3}{c}{Peak Memory (GB)}                                \\ \cmidrule(l){2-7}
                                      & DREAM                              & DATM                                 & Ours  & DREAM & DATM  & Ours \\ \midrule
        Conv-4                        & 33906.17                           & 12470.90                             & 13.02 & 15.92 & 20.16 & 0.65 \\ \bottomrule[1.5pt]
    \end{tabular}
\end{table*}

\section{Ablation}
\label{sec:appendix_ablation}

\paragraph{Selection.}

To compare our selection strategy with others, namely Random, K-means~\cite{forgy1965cluster}, and Herding~\cite{welling2009herding}, we trained ResNet-18 using datasets distilled through various strategies, as delineated in Tables~\ref{tab:selection_ipc1}.

\begin{table*}[t]
    \centering
        \caption{Comparision with other selection strategies, with $\IPC = 1$}
    \label{tab:selection_ipc1}
    \begin{tabular}{@{}c|cccc@{}}
        \toprule
                      & Random         & K-means        & Herding        & Ours           \\ \midrule
        CIFAR-10      & 28.3 $\pm$ 0.4 & 30.3 $\pm$ 0.4 & 31.0 $\pm$ 0.3 & 31.3 $\pm$ 0.5 \\
        CIFAR-100     & 25.1 $\pm$ 0.0 & 25.5 $\pm$ 0.3 & 26.9 $\pm$ 0.2 & 30.6 $\pm$ 0.1 \\
        Tiny-ImageNet & 19.5 $\pm$ 0.5 & 19.6 $\pm$ 1.0 & 19.5 $\pm$ 0.2 & 25.5 $\pm$ 0.0 \\
        ImageNet-1k   & 5.0 $\pm$ 0.0  & 5.6 $\pm$ 0.1  & 5.6 $\pm$ 0.1  & 7.1 $\pm$ 0.2  \\ \bottomrule[1.5pt]
    \end{tabular}
\end{table*}

\section{Continual Learning}
\label{sec:continual_learning}


In comparison to SRe$^2$L, our study implements a five-step class-incremental learning approach using ResNet-18 across CIFAR-10, CIFAR-100, and Tiny-ImageNet datasets, each with an \IPC setting of 10.
The results of these experiments are depicted in Figures \ref{fig:continual_learning_cifar10_ipc10}, \ref{fig:continual_learning_cifar100_ipc10}, and \ref{fig:continual_learning_tiny_ipc10} for CIFAR-10, CIFAR-100, and Tiny-ImageNet, respectively.

\begin{figure*}[t]
    \centering
    \includegraphics[width=0.6\linewidth,height=\textheight,keepaspectratio]{resources/figures/continual_learning_cifar10_ipc10.pdf}
    \caption{\small
        Visualization of continual learning on CIFAR-10 with $\IPC=10$.
    }
    \label{fig:continual_learning_cifar10_ipc10}
\end{figure*}

\begin{figure*}[t]
    \centering
    \includegraphics[width=0.6\linewidth,height=\textheight,keepaspectratio]{resources/figures/continual_learning_cifar100_ipc10.pdf}
    \caption{\small
        Visualization of continual learning on CIFAR-100 with $\IPC=10$.
    }
    \label{fig:continual_learning_cifar100_ipc10}
\end{figure*}

\begin{figure*}[t]
    \centering
    \includegraphics[width=0.6\linewidth,height=\textheight,keepaspectratio]{resources/figures/continual_learning_tiny_ipc10.pdf}
    \caption{\small
        Visualization of continual learning on Tiny-ImageNet with $\IPC=10$.
    }
    \label{fig:continual_learning_tiny_ipc10}
\end{figure*}

\section{Visualization}
\label{sec:appendix_visualization}

\paragraph{Baselines.}

Within the scope of CIFAR-10 distillation under the $\IPC=10$ setting, we illustrate the visual representations of distilled datasets.
This includes visualizations for ADD~\cite{zhang2023accelerating} in Figure~\ref{fig:visualization_ADD}, DataDAM~\cite{sajedi2023datadam} in Figure~\ref{fig:visualization_DataDAM}, SRe$^2$L~\cite{yin2023squeeze} in Figure~\ref{fig:visualization_SRe2L}, and DREAM~\cite{liu2023dream} in Figure~\ref{fig:visualization_DREAM}. Distilled images of each method are generated starting from actual images, showcased in Figures~\ref{fig:visualization_datadam_init_n1} and \ref{fig:visualization_lic_init_n4}.

\paragraph{A simple squeezing-based method.}

The image-squeezing process entails resizing and concatenating images to facilitate dataset distillation. For example, consider the manipulation of $4$ images, each originally sized at $224 \times 224$ pixels. The initial step involves downsizing each image to $112 \times 112$ pixels. Subsequently, these reduced images are merged into a single composite image, effectively reverting to the original resolution of $224 \times 224$ pixels. This approach underpins a simplistic, squeezing-based dataset distillation method, whereby $N$ randomly selected original images are compressed into one distilled image to compose a condensed dataset. In the context of distilling CIFAR-10 with an $\IPC=10$ configuration, we exhibit the visual outcomes of this process for diverse settings: $N=1$ in Figure~\ref{fig:visualization_squeezed_n1}, $N=4$ in Figure~\ref{fig:visualization_squeezed_n4}, $N=9$ in Figure~\ref{fig:visualization_squeezed_n9}, $N=16$ in Figure~\ref{fig:visualization_squeezed_n16}, and $N=25$ in Figure~\ref{fig:visualization_squeezed_n25}.


\paragraph{Our proposed \algopt.}

With an \IPC setting of 10, we illustrate the distilled datasets generated by our proposed \algopt. These include visualizations for CIFAR-10 in Figure~\ref{fig:visualization_lic_cifar10}, CIFAR-100 in Figure~\ref{fig:visualization_lic_cifar100}, Tiny-ImageNet in Figure~\ref{fig:visualization_lic_tinyimagenet}, and ImageNet-1k in Figure~\ref{fig:visualization_lic_imagenet1k}.


\begin{figure}[p]
    \centering
    \includegraphics[width=1\linewidth,height=\textheight,keepaspectratio]{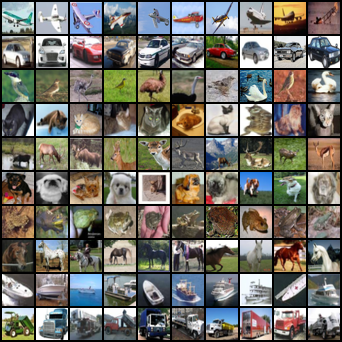}
    \caption{\small
        Visualization of initialized images before distilling on CIFAR-10.
    }
    \label{fig:visualization_datadam_init_n1}
\end{figure}

\begin{figure}[p]
    \centering
    \includegraphics[width=1\linewidth,height=\textheight,keepaspectratio]{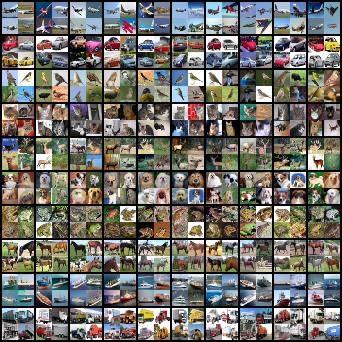}
    \caption{\small
        Visualization of initialized data before distilling on CIFAR-10 data showcases the mixture of $4$ images per initial instance.
    }
    \label{fig:visualization_lic_init_n4}
\end{figure}

\begin{figure}[p]
    \centering
    \includegraphics[width=1\linewidth,height=\textheight,keepaspectratio]{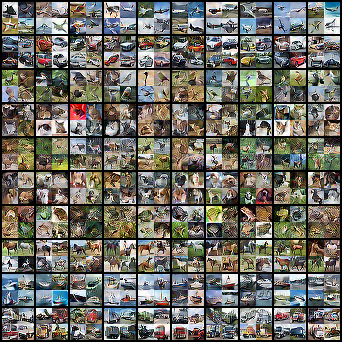}
    \caption{\small
        Synthetic data visualization on CIFAR-10 from ADD~\cite{zhang2023accelerating}.
    }
    \label{fig:visualization_ADD}
\end{figure}

\begin{figure}[p]
    \centering
    \includegraphics[width=1\linewidth,height=\textheight,keepaspectratio]{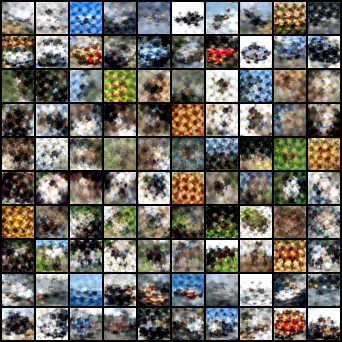}
    \caption{\small
        Synthetic data visualization on CIFAR-10 from DataDAM~\cite{sajedi2023datadam}.
    }
    \label{fig:visualization_DataDAM}
\end{figure}

\begin{figure}[p]
    \centering
    \includegraphics[width=1\linewidth,height=\textheight,keepaspectratio]{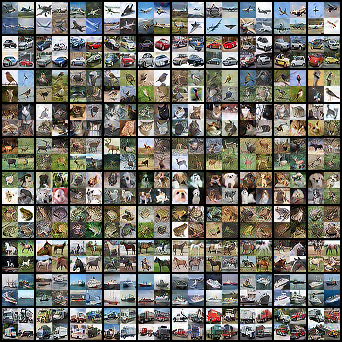}
    \caption{\small
        Synthetic data visualization on CIFAR-10 from DREAM~\cite{liu2023dream}.
    }
    \label{fig:visualization_DREAM}
\end{figure}

\begin{figure}[p]
    \centering
    \includegraphics[width=1\linewidth,height=\textheight,keepaspectratio]{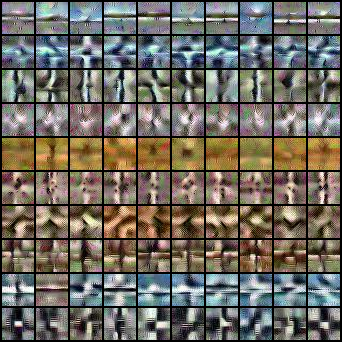}
    \caption{\small
        Synthetic data visualization on CIFAR-10 from SRe$^2$L \cite{yin2023squeeze}.
    }
    \label{fig:visualization_SRe2L}
\end{figure}

\begin{figure}[p]
    \centering
    \includegraphics[width=1\linewidth,height=\textheight,keepaspectratio]{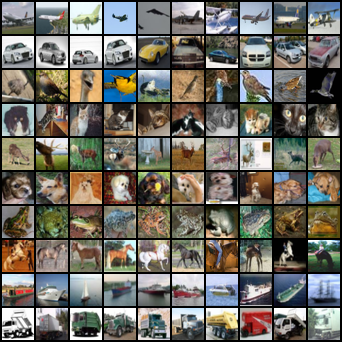}
    \caption{\small
        Initialized real data visualization on CIFAR-10 with $N=1$.
    }
    \label{fig:visualization_squeezed_n1}
\end{figure}

\begin{figure}[p]
    \centering
    \includegraphics[width=1\linewidth,height=\textheight,keepaspectratio]{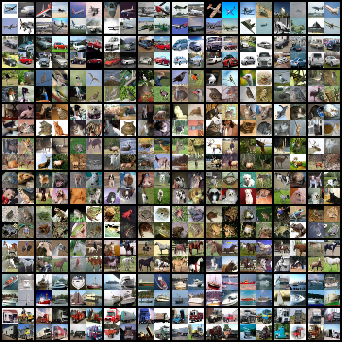}
    \caption{\small
        Initialized real data visualization on CIFAR-10 with $N=4$.
    }
    \label{fig:visualization_squeezed_n4}
\end{figure}

\begin{figure}[p]
    \centering
    \includegraphics[width=1\linewidth,height=\textheight,keepaspectratio]{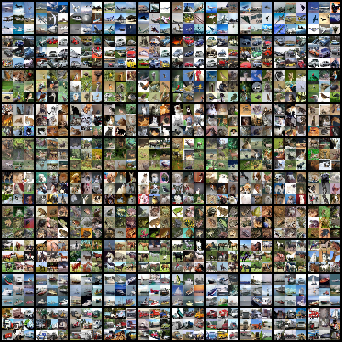}
    \caption{\small
        Initialized real data visualization on CIFAR-10 with $N=9$.
    }
    \label{fig:visualization_squeezed_n9}
\end{figure}

\begin{figure}[p]
    \centering
    \includegraphics[width=1\linewidth,height=\textheight,keepaspectratio]{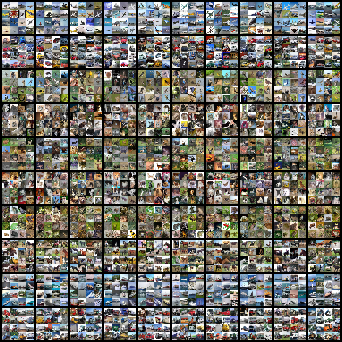}
    \caption{\small
        Initialized real data visualization on CIFAR-10 with $N=16$.
    }
    \label{fig:visualization_squeezed_n16}
\end{figure}

\begin{figure}[p]
    \centering
    \includegraphics[width=1\linewidth,height=\textheight,keepaspectratio]{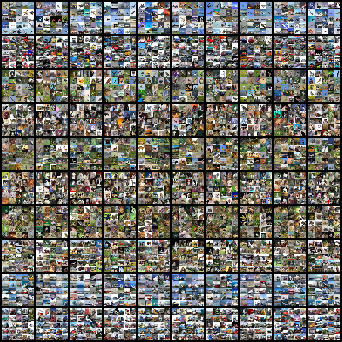}
    \caption{\small
        Initialized real data visualization on CIFAR-10 with $N=25$.
    }
    \label{fig:visualization_squeezed_n25}
\end{figure}

\begin{figure}[p]
    \centering
    \includegraphics[width=1\linewidth,height=\textheight,keepaspectratio]{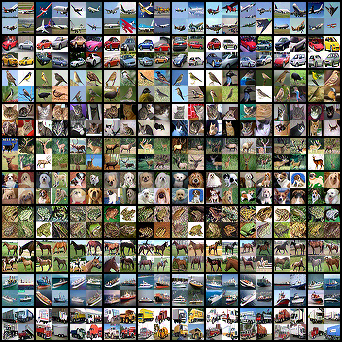}
    \caption{\small
        Synthetic data visualization on CIFAR-10 from \textbf{\algopt(Ours)}.
    }
    \label{fig:visualization_lic_cifar10}
\end{figure}


\begin{figure*}[p]
    \centering
    \begin{minipage}[t]{0.48\textwidth} 
        \centering
        \includegraphics[width=\linewidth,keepaspectratio]{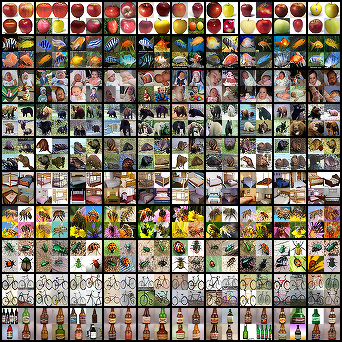}
        \caption{\small Synthetic data visualization on CIFAR-100 from \textbf{\algopt(Ours)}.}
        \label{fig:visualization_lic_cifar100}
        
        \vspace{1em} 
        
        \includegraphics[width=\linewidth,keepaspectratio]{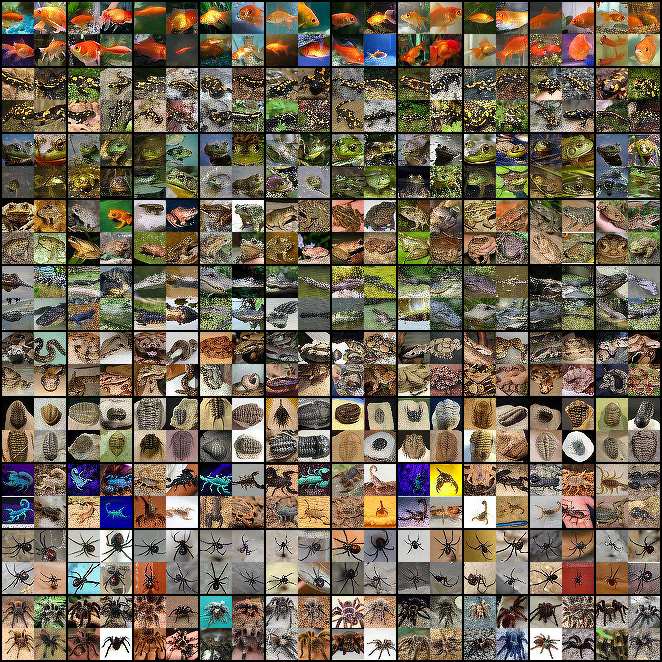}
        \caption{\small Synthetic data visualization on Tiny-ImageNet from \textbf{\algopt(Ours)}.}
        \label{fig:visualization_lic_tinyimagenet}
    \end{minipage}
    \hfill
    \begin{minipage}[t]{0.48\textwidth} 
        \centering
        \includegraphics[width=\linewidth,keepaspectratio]{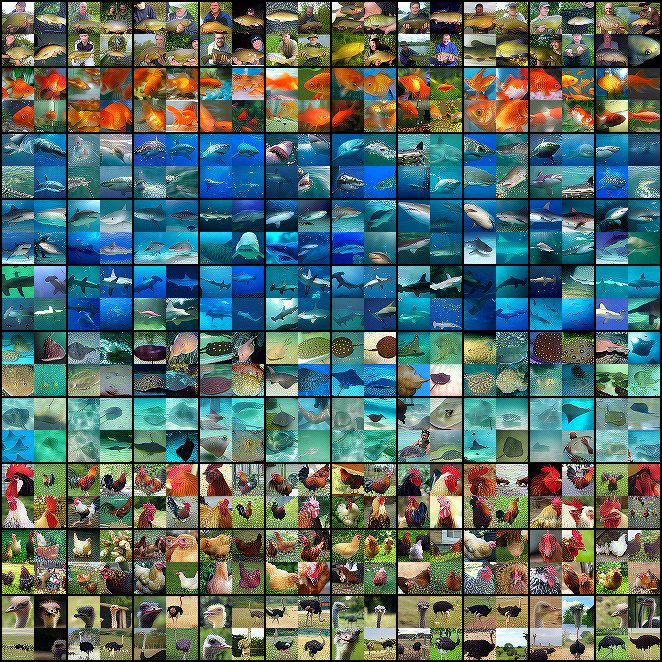}
        \caption{\small Synthetic data visualization on ImageNet-1k from \textbf{\algopt(Ours)}.}
        \label{fig:visualization_lic_imagenet1k}
    \end{minipage}
\end{figure*}


\end{document}